\newcommand{\BCINPUT}{\STATE \textbf{Input: }}
\newcommand{\BCOUTPUT}{\STATE \textbf{Output: }}
\theoremstyle{plain}
\newtheorem{theorem}{Theorem}
\newtheorem{lemma}{Lemma}
\theoremstyle{definition}
\theoremstyle{remark}
\title{Bellman Calibration for $V$-Learning in Offline Reinforcement Learning}
\author{
  Lars van der Laan \\
  Department of Statistics, University of Washington \\
  \texttt{lvdlaan@uw.edu} \\
  \And
  Nathan Kallus \\
  Netflix and Cornell University
}
\begin{document}

\maketitle
\newcommand{\HopperQRunLabel}{User-selected hopper_q_calibration_deadline_final_nonlin_hopper192_s20}
\newcommand{\HopperQUnits}{220}
\newcommand{\HopperQSeeds}{20}
\newcommand{\HopperQPolicies}{11}
\newcommand{\HopperQNeuralLinearCal}{0.475}
\newcommand{\HopperQNeuralLinearMSE}{0.409}
\newcommand{\HopperQNeuralIsoHistCal}{0.393}
\newcommand{\HopperQNeuralIsoHistMSE}{0.318}
\newcommand{\HopperQNeuralIsoHistOPE}{0.973}

\begin{abstract}
Reliable long-horizon value prediction is difficult in offline reinforcement
learning because fitted value methods combine bootstrapping, function
approximation, and distribution shift, while standard guarantees often require
Bellman completeness or realizability. We introduce \emph{Bellman calibration},
a weak reliability criterion requiring that states assigned similar predicted
values have average Bellman targets that agree with those predictions. This
criterion yields a scalar calibration error for diagnosing systematic numerical
miscalibration, which we estimate from off-policy data using doubly robust
Bellman target estimates. We then propose \emph{Iterated Bellman Calibration}, a
model-agnostic post-hoc procedure that recalibrates any learned value predictor
by fitting a one-dimensional map of its original prediction, with histogram and
isotonic variants. We prove finite-sample guarantees showing that Bellman
calibration error is controlled at one-dimensional nonparametric rates without
Bellman completeness or value-function realizability. Our value-error bounds
separate statistical estimation, finite-iteration, and approximation errors, clarifying when calibration improves value prediction and when
its gains are limited by the information in the original predictor or insufficient coverage.
\end{abstract}
\section{Introduction}

Many applications require predicting the long-term consequences of a decision
policy in a sequential, stochastic environment. We study this problem in a
Markov decision process (MDP), where the goal is to estimate the long-run value
of a target policy~\(\pi\), possibly from data collected under a different
behavior policy. This off-policy value-prediction problem arises throughout RL
and in many application domains: clinicians anticipate long-term outcomes under
proposed treatment rules \citep{van2019calibration}; online platforms estimate
customer lifetime value and retention under alternative recommendation
strategies \citep{theocharous2015ad,maystre2022temporally,xue2025auro}; and
economists study downstream effects of counterfactual programs
\citep{rust1987optimal,cowgill2019economics}. In these settings, systematic
miscalibration can distort policy comparisons, misstate long-run effects, and
lead to poor decisions.

Modern predictors such as neural networks and gradient-boosted trees are often
miscalibrated: their numerical predictions may systematically differ from the
quantities they are intended to estimate because of misspecification, limited
data, or distribution shift
\citep{zadrozny2001obtaining,niculescu2005predicting,bella2010calibration,guo2017calibration}.
For long-horizon value prediction, this concern is amplified by bootstrapping:
fitted value methods regress on targets that themselves depend on previous
value estimates, so finite-iteration error, approximation error, and off-policy
shift can propagate into the final value scale
\citep{gordon1995stable,tsitsiklis1996analysis,munos2008finite,
farahmand2010error,agarwal2021deep}. Existing guarantees typically control
these errors by imposing strong structure, such as Bellman completeness,
realizability, linear approximation, or finite state spaces
\citep{chen2019information,xie2022role,meyn2024projected,xie2021batch}. Min--max
alternatives relax some aspects of fitted value iteration but introduce their
own challenges, including saddle-point optimization, rich critic or dual
classes, and strong coverage requirements
\citep{dai2018sbeed,uehara2020minimax,jin2021bellman,xie2021batch}.

Calibration is a classical reliability property: among units assigned the same
predicted value, the corresponding outcomes should agree with that value on
average. In population terms, a calibrated predictor satisfies
\(f(X) = E[Y \mid f(X)]\). This suggests an RL analogue. Informally, a value
predictor is calibrated if, among states assigned the same predicted value, the
average target-policy Bellman backup matches that value; that is,
\(v(S) = E_\pi\{R + \gamma v(S') \mid v(S)\}\). This is a weak but meaningful
form of Bellman consistency: it constrains only the scalar prediction rather
than the full state-to-value mapping, while still ruling out systematic under-
or overestimation within level sets of the prediction.

Despite its central role in supervised learning, where predictions are
calibrated against observed labels, empirical frequencies, or conditional means
\citep{lichtenstein1977calibration,platt1999probabilistic,zadrozny2001obtaining,
vovk2005algorithmic,gupta2020distribution,gupta2021distribution,
whitehouse2024orthogonal}, calibration remains comparatively underexplored for
long-horizon value prediction in RL. Table~\ref{tab:rl-calibration-motivation}
highlights why this is problematic: value estimates can be highly informative
yet numerically misaligned because of temporal shift, off-policy fitting, finite
Bellman iteration, or approximation and optimization error. Unlike in supervised
learning, however, the calibration target is not directly observed; long-run
returns must be inferred from one-step transitions through Bellman structure.
We therefore study post-hoc Bellman calibration: a restricted recalibration of a
pretrained value predictor that corrects systematic scale errors without a new
high-dimensional Bellman fit, with finite-sample guarantees that do not require
realizability or Bellman completeness.

\begin{table}[t]
\centering
\small
\setlength{\tabcolsep}{4pt}
\renewcommand{\arraystretch}{1.08}
\caption{Why RL value predictions can be informative but poorly calibrated.}
\label{tab:rl-calibration-motivation}
\begin{tabularx}{\linewidth}{p{0.47\linewidth} p{0.45\linewidth}}
\toprule
\textbf{Failure mode} & \textbf{Calibration correction} \\
\midrule

Temporal or distribution shift: value estimates reflect past rewards, policies,
or environments rather than the current target setting
\citep{kumar2020conservative}.
&
Update the value scale using recent or gold-standard trajectories, while
retaining signal from the historical model.
\\
\midrule

Off-policy fitting: a high-capacity value model trained on large
behavior-policy data can be informative but mis-scaled for the target-policy
state--action distribution \citep{fujimoto2019off}.
&
Use the large off-policy sample to learn the value predictor, then recalibrate its
scale on a smaller on-policy or target-policy reference sample.
\\
\midrule

Finite Bellman iteration: computational limits can force early stopping or a
shorter effective horizon, mis-scaling values for the intended return target
\citep{munos2008finite}.
&
Re-align predictions to the intended horizon without rerunning the full Bellman
iteration.
\\
\midrule

Approximation and optimization error: misspecified value classes, restricted
critics, or unstable saddle-point training can yield biased value predictions
\citep{uehara2021finite}.
&
Remap the biased but informative prediction to observed or Bellman-consistent values,
rather than requiring exact specification.
\\

\bottomrule
\end{tabularx}
\end{table}

\textbf{Contributions.}
We connect two largely separate literatures: post-hoc calibration for
supervised prediction and long-horizon value learning in RL. Existing work gives
rich methods for learning value functions, but much less theory for
model-agnostic reliability assessment and post-hoc correction of already learned
value predictors. Our contributions are:

\begin{enumerate}[leftmargin=1.5em,topsep=0.2em,itemsep=0.25em,parsep=0pt,partopsep=0pt]
\item We formalize \textbf{Bellman calibration}, a dynamic analogue of
supervised calibration for infinite-horizon value prediction. Bellman consistency
implies calibration, but calibration is weaker: predictions need only match
average Bellman backups among states assigned similar values.
\item We propose \textbf{\(\ell^2\)-Bellman calibration error} as a scalar
diagnostic for systematic value miscalibration and develop debiased off-policy
estimators of this error. We also prove a \textbf{calibration-refinement} bound
showing that value error decomposes into calibration error and the approximation
error of the best predictor based only on the original model output.
\item We introduce \textbf{post-hoc Bellman calibration} via histogram binning
and isotonic regression. The methods apply to any value predictor and use
doubly robust Bellman targets for off-policy data.
\item We prove \textbf{finite-sample guarantees} for histogram calibration,
controlling both \(\ell^2\)-Bellman calibration error and value error. The
bounds require neither Bellman completeness nor realizability and separate
statistical, auxiliary-model, and iteration errors.
\end{enumerate}

\subsection{Related work}

\textbf{Post-hoc calibration.}
Our work builds on post-hoc calibration for supervised prediction
\citep{zadrozny2001obtaining,zadrozny2002transforming,niculescu2005predicting,
bella2010calibration,guo2017calibration}; see \citet{roth2022uncertain} for an
overview. This literature calibrates static predictions against observed labels,
empirical frequencies, or conditional means. We study long-horizon value
predictions in MDPs, where returns are not directly observed and must be inferred
through Bellman structure. We are also related to causal calibration for static
treatment-effect prediction \citep{van2023causal,whitehouse2024orthogonal}, but
our setting is dynamic and long-horizon, with calibration targets defined by
value functions rather than one-step conditional effects. The closest work is \citep{van2025automaticDRL}, who use isotonic post-hoc
calibration of \(Q\)-functions inside Double RL policy-value estimators
\citep{kallus2020double,kallus2022efficiently}. Their calibration step is used
as an auxiliary-model correction for a global value estimator. We instead study calibrated value prediction directly and give finite-sample
guarantees for both Bellman calibration error and value error. The same framework also applies to \(Q\)-function calibration by replacing
importance-weighted FVI with unweighted FQE and treating each state-action pair
as the state.

\textbf{Relation to Bellman learning and fine-tuning.}
Bellman calibration complements methods that change value learning itself,
including minimax or adversarial Bellman objectives
\citep{uehara2020minimax,uehara2023offline,xie2021batch,di2023pessimistic},
conservative updates \citep{kumar2020conservative,an2021uncertainty}, and
structural assumptions such as linearity, low rank, representations, or
discretization
\citep{shah2020sample,chang2022learning,peng1993convergence,van2006performance}.
It can be applied after any such method because it only needs scalar value
predictions and a reference sample. Unlike offline-to-online transfer,
critic/policy fine-tuning, or safe policy improvement
\citep{rusu2016progressive,nair2020awac,lee2022offline,laroche2019safe},
Bellman calibration fixes the target policy, value model, representation, and
Bellman learner, and fits only a one-dimensional post-hoc map to the target
value scale. Thus it is an output-scale correction rather than RL fine-tuning,
with finite-sample guarantees for calibration and value error. Related work on
calibrated dynamics uncertainty, conformal sets, and distributional RL targets
concerns different objects
\citep{malik2019calibrated,zhang2023conformal,sun2023conformal,
bellemare2017distributional,dabney2018implicit}.

\section{Calibration for Value Functions}
\subsection{Setup and Notation}

We consider an MDP with state space \(\mathcal S\), discrete action space
\(\mathcal A\), initial-state distribution \(\rho\) for \(S_0\), transition kernel
\(P(s'\mid s,a)\), reward function \(r_0(s,a)\), and discount factor
\(\gamma\in[0,1)\). Data are collected under a behavior policy
\(b_0(a\mid s)\). Each calibration sample is an i.i.d.\ transition
\((S,A,R,S')\), where \((S,A)\sim \rho\times b_0\),
\(S'\sim P(\cdot\mid S,A)\), and
\(R=r_0(S,A)+\varepsilon\) with \(E[\varepsilon\mid S,A]=0\). We observe a calibration sample
\(\mathcal C_n:=\{(S_i,A_i,R_i,S_i')\}_{i=1}^n\).

Let \(\pi(a\mid s)\) be the target policy. The value function under \(\pi\) is \(v_0(s)
=
E_{\pi}[\sum_{t=0}^\infty \gamma^t R_t
\,|\, S_0=s]\), where \(E_\pi\) denotes the law induced by
\(A_t\sim\pi(\cdot\mid S_t)\) and
\(S_{t+1}\sim P(\cdot\mid S_t,A_t)\).

For a state-action function \(f\) and state function \(v\), define
\[
\begin{aligned}
(\pi f)(s)&:=\sum_{a\in\mathcal A}\pi(a\mid s)f(s,a),\\
(b_0 f)(s)&:=\sum_{a\in\mathcal A}b_0(a\mid s)f(s,a),\\
(Pv)(s,a)&:=E[v(S')\mid S=s,A=a].
\end{aligned}
\]
Write
\[
w_\pi:=\pi/b_0,\qquad
r_{0,\pi}:=\pi r_0,\qquad
P_\pi:=\pi P,\qquad
\mathcal T_\pi v := r_{0,\pi}+\gamma P_\pi v
\]
for the policy Bellman operator. The value function is the unique bounded fixed
point \citep{bellman1952theory,bellman1966dynamic}
\begin{equation}
\label{eqn::bellman}
v_0=\mathcal T_\pi v_0 .
\end{equation}
For a measure \(\mu\) on \(\mathcal S\), write
\[
\|f\|_{2,\mu}
:=
\left\{\int f(s)^2\,\mu(ds)\right\}^{1/2}.
\]
Let \(L^2(\rho)\) denote the space of square-integrable state functions under
\(\rho\). Let \(\|f\|_{n,S,A}\) and \(\|g\|_{n,S'}\) denote empirical \(L^2\)
norms over \(\{(S_i,A_i)\}_{i=1}^n\) and \(\{S_i'\}_{i=1}^n\), respectively.
We write \(\lesssim\) for inequalities up to absolute constants and
\([B]:=\{1,\ldots,B\}\).

\subsection{Bellman Calibration: A Minimal Requirement}

Let \(\hat v\) be an estimated value function for the target policy \(\pi\),
fit on data independent of the calibration sample \(\mathcal C_n\). Our goal is
not to replace the underlying value-learning procedure, but to impose a weak
post-hoc Bellman consistency requirement on its output. Throughout, we call the
scalar \(\hat v(S)\) the predicted value; in the supervised calibration
literature, this same scalar is often called a prediction score.

A natural starting point is \emph{statewise Bellman consistency}:
\[
\hat v(S)
\approx
E_\pi\!\left[R+\gamma \hat v(S')\mid S\right].
\]
Equivalently, the Bellman residual
\(R+\gamma \hat v(S')-\hat v(S)\) should have conditional mean zero given the
full state. In high-dimensional or continuous state spaces, this is a strong
requirement: checking or enforcing it requires estimating a state-dependent
conditional mean and is closely tied to structural assumptions such as Bellman
completeness \citep{chen2019information}.

We instead require Bellman consistency only after grouping states by their
predicted value:
\[
\hat v(S)
\approx
E_\pi\!\left[R+\gamma \hat v(S')\mid \hat v(S)\right].
\]
Thus, predictions should agree on average with their Bellman targets within
groups of states assigned the same predicted value \(\hat v(S)\). This is the
dynamic analogue of post-hoc calibration in supervised prediction, where
predictions are required to be correct on average conditional on their own
numerical output
\citep{zadrozny2002transforming,niculescu2005predicting,guo2017calibration}.
The benefit is that a high-dimensional Bellman moment is reduced to a
one-dimensional conditional moment, enabling post-hoc correction without
requiring the full fitted Bellman update to be correctly specified.

\textbf{Bellman calibration.}
For any candidate value function \(v\), define the Bellman-calibration map
\begin{equation}
\label{eqn::calibrationmap}
\Gamma_0(v)(s)
:=
E_\pi\!\left[R+\gamma v(S') \mid v(S)=v(s)\right].
\end{equation}
We say that \(v\) is \emph{Bellman calibrated} if
\(v(S)=\Gamma_0(v)(S)\) almost surely, or equivalently,
\[
E_\pi\!\left[R+\gamma v(S')-v(S)\mid v(S)\right]=0
\quad\text{almost surely}.
\]

\textbf{A necessary but weak condition.}
The true value function \(v_0\) is Bellman calibrated. The Bellman equation gives
\(v_0(S)=E_\pi[R+\gamma v_0(S')\mid S]\); conditioning on the coarser predicted value
\(v_0(S)\) yields
\[
v_0(S)
=
E_\pi[R+\gamma v_0(S')\mid v_0(S)]
=
\Gamma_0(v_0)(S).
\]
Thus Bellman consistency implies Bellman calibration. The converse need not
hold: because calibration only conditions on the scalar predicted value \(v(S)\),
state-level errors can average out within predicted-value groups
\citep{gupta2020distribution,van2023causal}. Hence Bellman calibration is a
minimal reliability requirement, not a guarantee of value accuracy.

\textbf{Reward-model interpretation.}
Define the reward implied by a value predictor \(v\) as
\[
r_{\pi,v}(s):=v(s)-\gamma P_\pi v(s).
\]
This is the one-step reward under which \(v\) would satisfy the Bellman
equation for policy \(\pi\). Bellman calibration is equivalent to
\[
E_\pi\!\left[R-r_{\pi,v}(S)\mid v(S)\right]=0.
\]
Thus, among states assigned the same predicted long-run value, the reward
implied by \(v\) matches the true one-step reward on average. When
\(\gamma=0\), this reduces to ordinary regression calibration
\citep{roth2022uncertain}.

\subsection{A Calibration--Refinement Bound for Value Error}

We next relate Bellman calibration to value prediction error. The key point is
that calibration is a post-processing operation: it can correct systematic
Bellman residuals among states with the same predicted value, but it cannot
recover information lost by reducing the state \(S\) to \(\hat v(S)\).

For a candidate value function \(v\), define the \(\ell^2\) \textbf{Bellman calibration
error}
\begin{equation}
\label{eqn::calerror}
\mathrm{Cal}_{\ell^2}^2(v)
:=
\|v-\Gamma_0(v)\|_{2,\rho}^2
=
E_\rho\!\left[
  \left\{
    E_\pi[R+\gamma v(S')-v(S)\mid v(S)]
  \right\}^2
\right].
\end{equation}
This criterion is zero exactly when \(v\) is Bellman calibrated, and measures
the remaining Bellman residuals conditional on the predicted value.

The value predictors generated by the original prediction \(\hat v(S)\) form the
class
\[
\mathcal V_{\hat v}
:=
\{g\circ\hat v:g:\mathbb R\to\mathbb R \text{ measurable}\},
\qquad
\Pi_{\hat v}
:=
\text{\(L^2(\rho)\)-projection onto }\mathcal V_{\hat v}.
\]
Define the projected transition by
\[
P_{\pi,\hat v}
:=
\Pi_{\hat v}P_\pi,
\qquad
P_{\pi,\hat v}f(s)
=
E_\pi[f(S')\mid \hat v(S)=\hat v(s)].
\]

Define the discounted projected concentrability by
\[
\mathcal C_{\hat v,\gamma}
:=
\left\{(1-\gamma)\sum_{t=0}^\infty
\gamma^t
\left\|
\frac{d(\rho P_{\pi,\hat v}^t)}{d\rho}
\right\|_\infty^{1/2}
\right\}^2 .
\]

\begin{enumerate}[label=\textbf{A\arabic*}, ref={A\arabic*}, leftmargin=1.5em, series=cond]
\item \label{cond::A1}
\textbf{Projected concentrability.}
\(\mathcal C_{\hat v,\gamma}<\infty\).
\end{enumerate}
When \(\rho\) is stationary under \(P_{\pi}\), it is also stationary under
\(P_{\pi,\hat v}\) by Lemma~\ref{lemma:aggregated-stationary}, so
\(\mathcal C_{\hat v,\gamma}=1\). More generally,
\(\mathcal C_{\hat v,\gamma}\) measures distribution drift from \(\rho\): it is
small when projected rollouts remain comparable to \(\rho\), and large when they
concentrate where \(\rho\) has little mass.

\begin{theorem}[Calibration--refinement bound]
\label{theorem::calrefine}
Under \ref{cond::A1},
\[
\|\hat v-v_0\|_{2,\rho}
\le
\frac{\sqrt{\mathcal C_{\hat v,\gamma}}}{1-\gamma}
\left\{
\mathrm{Cal}_{\ell^2}(\hat v)
+
\min_{\theta:\mathbb R\to\mathbb R}
\|\theta\circ\hat v-v_0\|_{2,\rho}
\right\}.
\]
\end{theorem}

Theorem~\ref{theorem::calrefine} gives a post-processing guarantee: value error
is controlled by Bellman calibration error \(\mathrm{Cal}_{\ell^2}(\hat v)\)
plus the refinement error $\min_{\theta:\mathbb R\to\mathbb R}
\|\theta\circ\hat v-v_0\|_{2,\rho},$ the error of the best value predictor that depends on the state only through
\(\hat v(S)\). The calibration error measures systematic Bellman residuals
within levels of \(\hat v(S)\), whereas the refinement error measures the
remaining value variation not captured by \(\hat v(S)\). Thus, calibration can
remove Bellman miscalibration using only a post-processing map of the original
prediction, but it cannot recover information about \(v_0(S)\) absent from
\(\hat v(S)\). In particular, when calibration error is driven to zero, value
error is controlled, up to the projected concentrability and discount factors,
by the refinement error. This mirrors classical calibration--refinement
decompositions in classification and regression
\citep{murphy1973new,degroot1983comparison,van2023causal,whitehouse2024orthogonal}.
The factor \(\sqrt{\mathcal C_{\hat v,\gamma}}/(1-\gamma)\) plays the same role
as the concentrability and discount penalties in classical FVI/FQE error bounds
\citep{munos2008finite}.

\subsection{Evaluating Bellman Calibration Error with Doubly Robust Bellman Targets}
\label{sec:bellman-calibration-error}

We now focus on estimating \(\mathrm{Cal}_{\ell^2}(\hat v)\), the component of
Theorem~\ref{theorem::calrefine} that can be evaluated from transition data
and improved without refitting the value predictor. This makes Bellman calibration error both
a diagnostic for systematic Bellman residuals within the level sets of a fixed
predictor \(\hat v\) and an objective for post-hoc calibration.

Because data are generated under the behavior policy \(b_0\), we estimate the
statewise Bellman target using a doubly robust Bellman pseudo-outcome, extending
the calibration target estimates of \citep{xu2022calibration,van2023causal} to
dynamic settings. Let \(\widehat w_\pi\) estimate \(w_\pi:=\pi/b_0\). For a
fixed \(v\), define
\[
q_v(s,a)
:=
E\{R+\gamma v(S')\mid S=s,A=a\}.
\]
Let \(\hat q_v\) estimate \(q_v\). For \(O=(S,A,R,S')\), set
\[
\widehat Y_v(O)
:=
(\pi \hat q_v)(S)
+
\widehat w_\pi(A\mid S)
\{R+\gamma v(S')-\hat q_v(S,A)\},
\qquad
\widehat{\mathcal T}_\pi v(s)
:=
E\{\widehat Y_v(O)\mid S=s\}.
\]
The next result gives the conditional bias of this pseudo-outcome.

\begin{theorem}[Doubly robust Bellman target]
\label{theorem::DRpseudo}
For any fixed \(v\),
\[
E\!\left[
\{\widehat{\mathcal T}_\pi v-\mathcal T_\pi v\}(S)
\mid v(S)=t
\right]
=
E\!\left[
  b_0\!\left\{(w_\pi-\widehat w_\pi)(\hat q_v-q_v)\right\}(S)
  \mid v(S)=t
\right].
\]
Consequently, \(E\{\widehat Y_v(O)\mid v(S)\}
=
E\{(\mathcal T_\pi v)(S)\mid v(S)\}\) if either
\(\widehat w_\pi=w_\pi\) or \(\hat q_v=q_v\).
\end{theorem}

This construction includes importance weighting \((\hat q_v=0)\) and
model-based Bellman-target evaluation \((\widehat w_\pi=0)\) as special cases.
The theorem shows that either nuisance estimator suffices to recover the
prediction-conditional Bellman target.

To estimate the proposed calibration error, we use a \textbf{cross-fitted debiased moment estimator}.
Let \(U_i:=v(S_i)\) and \(\widehat Y_i:=\widehat Y_v(O_i)\). Following
\citet{xu2022calibration}, we estimate the squared calibration error by
\[
\widehat{\mathrm{Cal}}_{\ell^2,\mathrm{db}}^2(v)
:=
\frac1n\sum_{i=1}^n
\{\widehat Y_i-U_i\}
\{\widehat m_{-i}(U_i)-U_i\},
\]
where \(\widehat m_{-i}\) is fit without observation \(i\) to estimate
\(u\mapsto E(\widehat Y_v\mid v(S)=u)\). The population identity behind this
estimator is
\[
E\!\left[
\{\widehat Y_v-v(S)\}
\left\{E(\widehat Y_v\mid v(S))-v(S)\right\}
\right]
=
E\!\left[
\left\{E(\widehat Y_v\mid v(S))-v(S)\right\}^2
\right].
\]
By Theorem \ref{theorem::DRpseudo}, this equals
\(\mathrm{Cal}_{\ell^2}^2(v)\), up to function approximation error.

\section{Post-hoc Bellman Calibration}
\label{sec:posthoc-calibration}

\subsection{Iterated Bellman Calibration}
\label{sec:iterated-bellman-calibration}

Algorithm~\ref{alg::cal-gen} gives a post-hoc FVI-style procedure that
transforms an initial value predictor \(\hat v\) into a calibrated predictor
\(V_{\mathrm{cal}}=V_K=g_K\circ\hat v\). Starting from \(V_0=\hat v\), each
iteration forms doubly robust Bellman targets \(\widehat Y_{V_k}(O_i)\) and
regresses them on the fixed one-dimensional inputs \(\hat v(S_i)\) over a
calibrator class \(\mathcal G\), yielding \(V_{k+1}=g_{k+1}\circ\hat v\).
Equivalently, for
\(\mathcal V_{\hat v}(\mathcal G):=\{g\circ\hat v:g\in\mathcal G\}\), the
algorithm is a projected Bellman update restricted to functions of the original
prediction. This avoids high-dimensional function approximation while correcting
Bellman miscalibration in \(\hat v\); the contraction results below imply that
\(O\{\log(1/\varepsilon)/(1-\gamma)\}\) iterations suffice to reach optimization
error \(\varepsilon\).

\begin{algorithm}[h]
\caption{\textbf{Iterated Bellman Calibration}}
\label{alg::cal-gen}
\begin{algorithmic}[1]
\BCINPUT \(\hat v\), \(\mathcal C_n=\{O_i=(S_i,A_i,R_i,S_i')\}_{i=1}^n\), pseudo-outcome \(v\mapsto\widehat Y_v\), class \(\mathcal G\), iterations \(K\)
\STATE Initialize \(V_0=\hat v\)
\FOR{\(k=0,\ldots,K-1\)}
  \STATE \(V_{k+1}=g_{k+1}\circ\hat v\), where $ g_{k+1}\in\arg\min_{g\in\mathcal G}\sum_{i=1}^n
  \{\widehat Y_{V_k}(O_i)-g(\hat v(S_i))\}^2 .$
\ENDFOR
\BCOUTPUT \(V_K\)
\end{algorithmic}
\end{algorithm}

At the population level, the iteration approximates the projected Bellman
operator \(\mathcal B_{\hat v} f
:=
\Pi_{\hat v} r_{0,\pi} + \gamma P_{\pi,\hat v} f\), whose fixed point over functions of \(\hat v(S)\) has zero Bellman calibration
error. Exact finite-sample recovery of this projected fixed point would
generally require realizability in \(\mathcal V_{\hat v}(\mathcal G)\) and
Bellman completeness of the fitted update \citep{munos2008finite}. We instead
seek post-hoc Bellman calibration guarantees without imposing these conditions.
We therefore instantiate Algorithm~\ref{alg::cal-gen} with one-dimensional
histogram and isotonic calibrators \citep{stone1977consistent,barlow1972isotonic}.
These are dynamic analogues of histogram binning
\citep{zadrozny2001obtaining,zadrozny2002transforming,gupta2021distribution}
and isotonic calibration
\citep{zadrozny2002transforming,niculescu2005predicting,van2023causal,
van2025generalized}, and they admit finite-sample Bellman calibration
guarantees without Bellman completeness.

\subsection{Histogram Calibration}
\label{sec:hist}

Histogram calibration instantiates Algorithm~\ref{alg::cal-gen} with a
piecewise-constant calibrator. Partition the initial values
\(\{\hat v(S_i)\}_{i=1}^n\) into ordered bins \(I_1,\ldots,I_B\), using either
empirical quantiles (\emph{equal-mass}) or uniform discretization
(\emph{equal-width}). We take
\[
\mathcal G_B
:=
\mathrm{span}\{\mathbf 1_{I_b}:b\in[B]\},
\qquad
\mathcal H_B(\hat v)
:=
\{g\circ\hat v:g\in\mathcal G_B\},
\]
so every \(h\in\mathcal H_B(\hat v)\) is a value predictor that is constant
within bins of the original prediction. The bins may be data-adaptive, but
\(B\) is deterministic and may grow with \(n\).

With this class, each iteration simply averages the fitted Bellman targets
within each bin:
\[
g_{k+1}(t)
=
\frac{1}{n_b}
\sum_{i:\hat v(S_i)\in I_b}
\widehat Y_i^{(k)},
\qquad
t\in I_b,
\]
where \(n_b:=\sum_{i=1}^n\mathbf 1\{\hat v(S_i)\in I_b\}\). Thus, histogram
calibration is fitted Bellman iteration on a finite aggregation of the initial
predicted value. At convergence, the output approximately satisfies the empirical aggregated
Bellman equation within each bin:
\[
V_K(s)
\approx
E_n\!\left[
\widehat Y_{V_K}(O)
\,\middle|\,
\hat v(S)\in I_b
\right],
\qquad \hat v(s)\in I_b .
\]
This is the empirical analogue of Bellman calibration and is the key property
used in the next result.

 \textbf{Calibration error.}
We next give finite-sample, finite-iteration bounds for the Bellman calibration
error \eqref{eqn::calerror}. The analysis uses the following regularity
conditions.

\begin{enumerate}[label=\textbf{C\arabic*}, ref={C\arabic*}, leftmargin=1.5em, series=cond]

\item \label{cond::C1}
\textbf{Boundedness.}
\(R\), \(w_\pi\), \(\widehat w_\pi\), \(\hat v\), the Bellman targets \(q_f\),
and their estimates \(\hat q_f\) are uniformly bounded by \(M<\infty\) for all
\(f\) considered.

\item \label{cond::C1b}
\textbf{Sample splitting.}
The auxiliary estimators \(\widehat w_\pi\) and \(v\mapsto\hat q_v\), and the
initial predictor \(\hat v\) are trained on data independent of
\(\mathcal C_n\).

\item \label{cond::C2}
\textbf{Stability of the Bellman-target regression.}
For all \(f,g\in\mathcal H_B(\hat v)\) with
\(\|f\|_\infty,\|g\|_\infty\le M\),
\[
\|\hat q_f-\hat q_g\|_{n,S,A}
\le
L\,\|f-g\|_{n,S'} .
\]
\end{enumerate}

These conditions are mainly technical. Boundedness can be enforced by clipping
or truncation, and sample splitting can be replaced by cross-fitting. Condition
\ref{cond::C2} controls the complexity introduced by the map
\(v\mapsto\hat q_v\); it holds trivially for the importance-weighted target
\((\hat q_v=0)\), and holds with \(L=1\) for cellwise estimates of
\(E\{R+\gamma f(S')\mid S,A\}\).

\begin{theorem}[Calibration error for histogram binning]
\label{theorem::Calerrorhist}
Assume \ref{cond::C1}--\ref{cond::C2}. Then there is a constant
\(C<\infty\) such that, for any \(K\in\mathbb N\), with probability at least
\(1-\delta\),
\begin{align*}
\mathrm{Cal}_{\ell^2}(V_K)
\le\;&
C\left\{
\sqrt{\frac{B}{n}\log\!\left(\frac{n}{B}\right)}
+
\sqrt{\frac{\log(1/\delta)}{n}}
\right\} \\
&+
\big\|b_0\{(\widehat w_\pi-w_\pi)
(\widehat q_{V_K}-q_{V_K})\}\big\|_{2,\rho}  +
\big\|
\widehat{\mathcal T}_\pi V_K
-
\widehat{\mathcal T}_\pi V_{K-1}
\big\|_{2,\rho}.
\end{align*}
\end{theorem}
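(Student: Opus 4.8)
\medskip
\noindent\textbf{Proof plan.}\quad
The plan is to exploit the fixed-point form of the histogram update. Write $\Pi$ for the $L^2(b_0)$-orthogonal projection onto the bin-constant functions $\{\theta\circ\hat v:\theta\in\mathcal F_B\}$, so that $(\Pi q)(s)=\E[q(S)\mid\hat v(S)\in I_b]$ for $\hat v(s)\in I_b$, and $\Pi_n$ for the corresponding empirical within-bin averaging operator over $\mathcal C_n$ (well defined once the bins are nonempty, as under equal-mass binning or the usual convention for equal-width binning); both are $L^2$-contractions. With $\mathcal F=\mathcal F_B$, after Step~4 of Algorithm~\ref{alg::cal-gen} the iterate $\hat v^{(K)}$ is exactly the within-bin empirical average of the doubly robust target, i.e.\ $\hat v^{(K)}=\Pi_n\widehat{\mathcal T}_\pi(\hat v^{(K-1)})$. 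The idea is to compare this against $\Gamma_0(\hat v^{(K)})$ and split the gap into a statistical, a doubly robust, and an iteration-stability term, matching the three summands in the claim.

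First I would reduce the calibration error to a projection error. Since $\hat v^{(K)}=\theta_n^{(K)}\circ\hat v$ is constant on the bins of $\hat v$, the $\sigma$-field it generates is coarser than the bin $\sigma$-field, so by \eqref{eqn::calibrationmap2} and the tower rule $\Gamma_0(\hat v^{(K)})=\E[\Pi\,\mathcal T_\pi(\hat v^{(K)})\mid\hat v^{(K)}(S)]$; as $\hat v^{(K)}$ is measurable with respect to its own $\sigma$-field, conditional Jensen gives $\mathrm{Cal}_{\ell^2}(\hat v^{(K)})\le\|\hat v^{(K)}-\Pi\,\mathcal T_\pi(\hat v^{(K)})\|$. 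Using $\hat v^{(K)}=\Pi_n\widehat{\mathcal T}_\pi(\hat v^{(K-1)})$ and that the within-bin population mean of the pseudo-outcome is $\Pi$ applied to its conditional mean given $S$, namely $\widehat{\mathcal T}_{0,\pi}(\hat v^{(K-1)})$, I would telescope:
\begin{align*}
\hat v^{(K)}-\Pi\,\mathcal T_\pi(\hat v^{(K)})
&=\underbrace{\Pi_n\widehat{\mathcal T}_\pi(\hat v^{(K-1)})-\Pi\widehat{\mathcal T}_{0,\pi}(\hat v^{(K-1)})}_{(\mathrm{i})}
+\underbrace{\Pi\big[\widehat{\mathcal T}_{0,\pi}(\hat v^{(K-1)})-\widehat{\mathcal T}_{0,\pi}(\hat v^{(K)})\big]}_{(\mathrm{ii})}\\
&\quad+\underbrace{\Pi\big[\widehat{\mathcal T}_{0,\pi}(\hat v^{(K)})-\mathcal T_\pi(\hat v^{(K)})\big]}_{(\mathrm{iii})}.
\end{align*}

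The terms $(\mathrm{ii})$ and $(\mathrm{iii})$ are immediate. Contractivity of $\Pi$ gives $\|(\mathrm{ii})\|\le\|\widehat{\mathcal T}_{0,\pi}(\hat v^{(K)})-\widehat{\mathcal T}_{0,\pi}(\hat v^{(K-1)})\|$, the third summand. For $(\mathrm{iii})$, contractivity of $\Pi$, then Theorem~\ref{theorem::DRpseudo} with $v=\hat v^{(K)}$, and then the conditional-Jensen contraction $\|b_0\{g\}\|\le\|g\|$ yield $\|(\mathrm{iii})\|\le\|b_0\{(w_\pi-\widehat w_\pi)(\widehat q_{\hat v^{(K)}}-q_{\hat v^{(K)}})\}\|\le\|(\widehat w_\pi-w_\pi)(\widehat q_{\hat v^{(K)}}-q_{\hat v^{(K)}})\|$, the second summand. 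It remains to bound $\|(\mathrm{i})\|$ by the first summand.

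This last step is the main obstacle. Conditioning on the auxiliary sample---so that $\hat v$, $\widehat P$, $\widehat w_\pi$, $\hat r$ and hence the bins are deterministic and $\mathcal C_n$ is a fresh i.i.d.\ sample, by \ref{cond::C1b}---the data-dependent iterate $\hat v^{(K-1)}$ lies in the $B$-dimensional class $\mathcal V:=\{\theta\circ\hat v:\theta\in\mathcal F_B,\ \|\theta\|_\infty\le M'\}$, where $M'$ depends only on $M$ and $\gamma$ because \ref{cond::C1} makes $\widehat{\mathcal T}_\pi(v)$---hence every within-bin average---uniformly bounded. Thus $\|(\mathrm{i})\|\le\sup_{v\in\mathcal V}\|\Pi_n\widehat{\mathcal T}_\pi(v)-\Pi\widehat{\mathcal T}_{0,\pi}(v)\|$. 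For a fixed $v$ this is the mass-weighted $\ell^2$ norm of $B$ within-bin centered empirical averages of a bounded variable, of size $\sqrt{B/n}$ up to logarithmic factors. To make it uniform I would use that $v\mapsto\widehat{\mathcal T}_\pi(v)(S,A,R,S')$ is Lipschitz: $v(S')$ is $1$-Lipschitz in $\|\cdot\|_{n,S'}$, and the terms through $\widehat P v$ contribute at most $L\|\cdot\|_{n,S,A}$ by \ref{cond::C2}, so the image class $\{\widehat{\mathcal T}_\pi(v):v\in\mathcal V\}$ inherits the metric entropy $\lesssim B\log(1/\varepsilon)$ of $\mathcal V$. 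A chaining/maximal inequality for this finite-dimensional, uniformly bounded class, together with a Bernstein bound ensuring the empirical bin masses concentrate around their population values (so $\Pi_n$ behaves like $\Pi$ on every bin), then yields
\[
\sup_{v\in\mathcal V}\big\|\Pi_n\widehat{\mathcal T}_\pi(v)-\Pi\widehat{\mathcal T}_{0,\pi}(v)\big\|
\;\lesssim\;
\sqrt{\tfrac{B}{n}\log\tfrac{n}{B}}\;+\;\sqrt{\tfrac{\log(1/\delta)}{n}}
\]
with probability at least $1-\delta$. Combining the three bounds by the triangle inequality and absorbing absolute constants into $C$ completes the argument. The crux is precisely this uniformity: because the pseudo-outcome is built from the random iterate $\hat v^{(K-1)}$, the fluctuation bound must hold uniformly over the $B$-dimensional calibrator class, which is what produces the $\sqrt{(B/n)\log(n/B)}$ rate and where \ref{cond::C1}--\ref{cond::C2} do their work; controlling data-adaptive, possibly sparse bins is the remaining technical wrinkle.
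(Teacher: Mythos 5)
Your argument is correct in outline but takes a genuinely different route from the paper's. The paper never uses the explicit within-bin-average form of the estimator: it works only from the first-order orthogonality conditions \eqref{eqn::scores}, tests them against the calibration residual $\Gamma_0(\hat v^{(K)})-\hat v^{(K)}$ itself (which lies in the calibrator class because $\hat v^{(K)}$ is bin-constant), and arrives at a self-normalized basic inequality $\|\Delta\|^2\le (P_0-P_n)[\Delta\cdot(\widehat{\mathcal T}_\pi(\hat v^{(K-1)})-\hat v^{(K)})]+\|\Delta\|\cdot(\text{nuisance}+\text{iteration terms})$ as in \eqref{eqn::basicineq}; the empirical-process term is then controlled by a localized maximal inequality (Lemma~\ref{lemma:loc_max_ineq}) over the product class $\widehat{\mathcal G}$ with the entropy bound of Lemma~\ref{lemma::entropynumbers}, and the quadratic inequality is solved. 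Your conditional-Jensen reduction, the telescope into statistical/iteration/doubly-robust pieces, and the treatment of terms (ii)–(iii) via contraction of $\Pi$ and Theorem~\ref{theorem::DRpseudo} are all sound and parallel the paper's handling of the corresponding terms. What your route buys is transparency: for a fixed partition the estimator really is $\Pi_n\widehat{\mathcal T}_\pi(\hat v^{(K-1)})$ and the statistical term is visibly a collection of $B$ centered bin averages. What the paper's route buys is robustness: by relying only on orthogonality and population norms it never has to reason about empirical bin masses, sparse or empty bins, or data-adaptive bin boundaries (the entropy bound is over all $B$-piece step functions), and the identical skeleton is reused verbatim for isotonic calibration in Theorem~\ref{theorem::Calerroriso}, where no fixed-partition representation exists. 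The one place where your argument is materially less than a proof is the uniform bound on term (i): the chaining-plus-Bernstein step controlling $\sup_{v\in\mathcal V}\|\Pi_n\widehat{\mathcal T}_\pi(v)-\Pi\widehat{\mathcal T}_{0,\pi}(v)\|$, including the concentration of random bin masses under equal-mass binning, is asserted rather than carried out; it is standard and should deliver the claimed $\sqrt{(B/n)\log(n/B)}+\sqrt{\log(1/\delta)/n}$ rate, but it is precisely the technical content that the paper's localized-empirical-process argument supplies.
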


The bound separates three errors: the statistical error of one-dimensional
histogram calibration, matching classical rates when the calibration target is
observed directly
\citep{gupta2020distribution,gupta2021distribution,van2025generalized}; the
doubly robust auxiliary-model error from Theorem~\ref{theorem::DRpseudo}; and
the finite-iteration error, which decays geometrically as \(\gamma^K\) under the
contraction conditions in Appendix~\ref{appendix::convergence}, as in fitted
value iteration \citep{munos2008finite}. The leading calibration rate does not
accumulate over the horizon: \(\gamma\) enters only through the finite-iteration
and auxiliary-model terms. This reflects that Bellman calibration is a one-step
conditional prediction problem, even though translating calibration into value
accuracy still depends on discounting and coverage
(Theorem~\ref{theorem::calrefine}).

\textbf{Value error.}
We next show that post-hoc calibration does not add error beyond the
information loss from restricting to the histogram calibrator class, while it can
remove Bellman miscalibration in the original predictor.

Let \(\Pi_B\) denote the \(L^2(\rho)\)-projection onto
\(\mathcal H_B(\hat v)\), i.e.,
\[
\Pi_B q:=\arg\min_{h\in\mathcal H_B(\hat v)}\|q-h\|_{2,\rho},
\qquad
\mathcal B_{\pi,B}:=\Pi_B\mathcal T_\pi,
\qquad
V_B^\star=\mathcal B_{\pi,B}V_B^\star .
\]
Write \(P_{\pi,B}:=\Pi_B P_\pi\), so that, for
\(\hat b(s)=b\) when \(\hat v(s)\in I_b\),
\[
P_{\pi,B}f(s)
=
E_\pi[f(S')\mid \hat v(S)\in I_{\hat b(s)}].
\]
Finally, define the aggregated discounted concentrability by
\[
\mathcal C_{B,\gamma}
:=
\left\{(1-\gamma)\sum_{t=0}^\infty
\gamma^t
\left\|
\frac{d(\rho P_{\pi,B}^t)}{d\rho}
\right\|_\infty^{1/2}
\right\}^2 .
\]

\begin{enumerate}[label=\textbf{C\arabic*}, ref={C\arabic*}, leftmargin=1.5em, resume=cond]
\item \label{cond::Ccoverage}
\textbf{Aggregated projected concentrability.}
\(\mathcal C_{B,\gamma} < \infty\).
\end{enumerate}

\begin{theorem}[Value error for histogram calibration]
\label{theorem::regretHist}
Assume \ref{cond::C1}--\ref{cond::Ccoverage}. Then there exists a constant
\(C<\infty\) such that, for any \(K\in\mathbb N\), with probability at least
\(1-\delta\),
{\small
\begin{align*}
\|V_K-v_0\|_{2,\rho}
\le\;&
\gamma^K\|\hat v-V_B^\star\|_{\infty}
+
\frac{\sqrt{\mathcal C_{B,\gamma}}}{1-\gamma}
\min_{h\in\mathcal H_B(\hat v)}\|h-v_0\|_{2,\rho}
\\
&+
C\frac{\sqrt{\mathcal C_{B,\gamma}}}{1-\gamma}
\Bigg[
  \sqrt{\frac{B}{n}\log\!\left(\frac{n}{B}\right)}
  +
  \sqrt{\frac{\log(K/\delta)}{n}}
 +
\max_{0\le j\le K}
\big\|
b_0\{
(\widehat w_\pi-w_\pi)
(\widehat q_{V_j}-q_{V_j})
\}
\big\|_{2,\rho}
\Bigg].
\end{align*}
}
\end{theorem}
\textbf{Proof idea.}
The result avoids realizability and Bellman completeness for the original value
class because completeness holds after aggregation. For any
\(f\in\mathcal H_B(\hat v)\), the aggregated transition \(P_{\pi,B}f\) is
binwise constant in \(\hat v\), so \(P_{\pi,B}f\in\mathcal H_B(\hat v)\)
and hence \(\mathcal B_{\pi,B}f\in\mathcal H_B(\hat v)\). The proof uses
sup-norm contraction to obtain the aggregated fixed point, then propagates
\(\rho\)-norm one-step regression errors through a discounted resolvent
controlled by \(\mathcal C_{B,\gamma}\). It also bounds
\(V_B^\star-v_0\) by a histogram analogue of
Theorem~\ref{theorem::calrefine}. Unlike prior discretization arguments for
Bellman consistency
\citep{whitt1978approximations,peng1993convergence,van2006performance,
chen2019information,xie2021batch}, we discretize \(\hat v(S)\) to
target Bellman calibration.

\textbf{Discussion.}
Theorems~\ref{theorem::Calerrorhist} and~\ref{theorem::regretHist} separate
calibration from value accuracy. With known importance weights and negligible
iteration error, histogram calibration drives Bellman calibration error to zero
at the one-dimensional rate \(\sqrt{(B/n)\log(n/B)}\). Value error contains the
same stochastic term, but also includes the prediction-only approximation error $(\sqrt{\mathcal C_{B,\gamma}})/(1-\gamma)
\min_{h\in\mathcal H_B(\hat v)}\|h-v_0\|_{2,\rho},$ the cost of restricting the calibrated predictor to binned transformations of
the original prediction. Thus \(B\) controls the usual bias--variance tradeoff:
coarse bins may discard information in \(\hat v\), while fine bins increase
variance. The coverage factor \(\mathcal C_{B,\gamma}\) is the histogram
analogue of FVI concentrability.

It suffices to choose \(B\) by predictive cross-validation. The prediction-only approximation term
satisfies
\[
\min_{h\in\mathcal H_B(\hat v)}\|h-v_0\|_{2,\rho}
\le
\|\hat v-v_0\|_{2,\rho}
+
\min_{h\in\mathcal H_B(\hat v)}\|h-\hat v\|_{2,\rho}.
\]
Thus binning is no worse than the original predictor, up to identity
approximation on the range of \(\hat v(S)\). For equal-mass bins, the histogram
bias--variance balance suggests \(B\asymp n^{1/3}\), yielding
\citep{gyorfi2002distribution}
\[
\|V_K-v_0\|_{2,\rho}
\le
\frac{\sqrt{\mathcal C_{B,\gamma}}}{1-\gamma}
\|\hat v-v_0\|_{2,\rho}
+
O_p\{(\log n/n)^{1/3}\},
\]
with Bellman calibration error vanishing at the same rate.

\subsection{Isotonic Calibration}

Histogram calibration requires choosing the number of bins \(B\). As a
low-tuning alternative, we consider \emph{isotonic iterated Bellman
calibration}, obtained by taking \(\mathcal G=\mathcal G_{\mathrm{iso}}\) in
Algorithm~\ref{alg::cal-gen}, where \(\mathcal G_{\mathrm{iso}}\) is the class
of nondecreasing functions on \(\mathbb R\). Each update isotonic-regresses the
fitted Bellman targets on the initial predictions \(\hat v(S_i)\), producing a
monotone calibrator computable in near-linear time by the pool-adjacent-violators
algorithm (PAVA) \citep{best1990active}. Like histogram calibration, isotonic
calibration averages Bellman targets over groups of nearby predicted values,
but the groups are chosen adaptively by the monotonicity constraint
\citep{barlow1972isotonic}. This regularizes the calibration map without
selecting \(B\); since the identity map is monotone, isotonic calibration can
preserve an already reliable predictor while correcting systematic monotone
distortions.

\begin{enumerate}[label=\textbf{C\arabic*}, ref={C\arabic*}, leftmargin=1.5em, resume=cond]
\item \label{cond::C3}
\textbf{Finite variation.}
There exists \(V<\infty\) such that, almost surely, the function $t
\mapsto
E\!\left[
(\mathcal T_\pi V_K)(S)
\mid
\hat v(S)=t,\mathcal C_n
\right]$
has total variation at most \(V\).
\end{enumerate}

Condition~\ref{cond::C3} is standard in isotonic calibration
\citep{van2023causal} and controls the complexity of the one-dimensional
calibration curve. For histogram binning, the analogous complexity is governed
by the number of bins \(B\).

\begin{theorem}[Calibration error for isotonic calibration]
\label{theorem::Calerroriso}
Assume \ref{cond::C1}, \ref{cond::C1b}, \ref{cond::C2}, and \ref{cond::C3}
with \(\mathcal G=\mathcal G_{\mathrm{iso}}\). Then there exists a constant
\(C<\infty\) such that, for any \(K\in\mathbb N\), with probability at least
\(1-\delta\),
{\small
\begin{align*}
\mathrm{Cal}_{\ell^2}(V_K)
\le\;&
C\left\{
n^{-1/3}
+
\sqrt{\frac{\log(1/\delta)}{n}}
\right\}  +
\big\|
b_0\{
(\widehat w_\pi-w_\pi)
(\widehat q_{V_K}-q_{V_K})
\}
\big\|_{2,\rho}  +
\big\|
\widehat{\mathcal T}_\pi V_K
-
\widehat{\mathcal T}_\pi V_{K-1}
\big\|_{2,\rho}.
\end{align*}
}
\end{theorem}

The isotonic analysis parallels the histogram case but uses the adaptive pooling
induced by PAVA. Monotonicity is only a pooling regularizer; we do not assume the
calibration target is monotone. The resulting \(n^{-1/3}\) rate matches classical
isotonic regression \citep{chatterjee2013improved} and histogram binning with
\(B\asymp n^{1/3}\), without choosing \(B\). However, fully iterated isotonic
calibration does not inherit our histogram value-error bound without Bellman
completeness, because the PAVA partition can change across iterations.

Appendix~\ref{appendix::hybrid} therefore introduces an isotonic-histogram
hybrid: learn the isotonic partition once, then run the histogram update on that
fixed partition. This preserves adaptive pooling while restoring the
fixed-partition structure required by our finite-sample value-error analysis,
and is the procedure we recommend in practice.

\section{Experimental investigation}
\label{sec:experiments}

We evaluate Bellman calibration as a post-hoc correction for fitted off-policy
value predictors. Each base learner estimates a \(Q\)-function and induces a
target-policy value predictor, which is then calibrated. We use linear and
random-feature FQE with linear, histogram, and isotonic calibrators. The
synthetic experiments isolate the failure mode in
Table~\ref{tab:rl-calibration-motivation}: informative but mis-scaled values.
We use 5-fold cross-calibration \citep{van2023causal}, except under temporal
shift, where FQE is trained on an old-regime batch and \(g\) is fit on a small
current-regime batch. All main results use 100 replications and independent
evaluation data; values below one indicate improvement. Details are in
Appendix~\ref{app:experiments}.

\begin{figure}[t]
  \centering
  \includegraphics[width=0.8\linewidth]{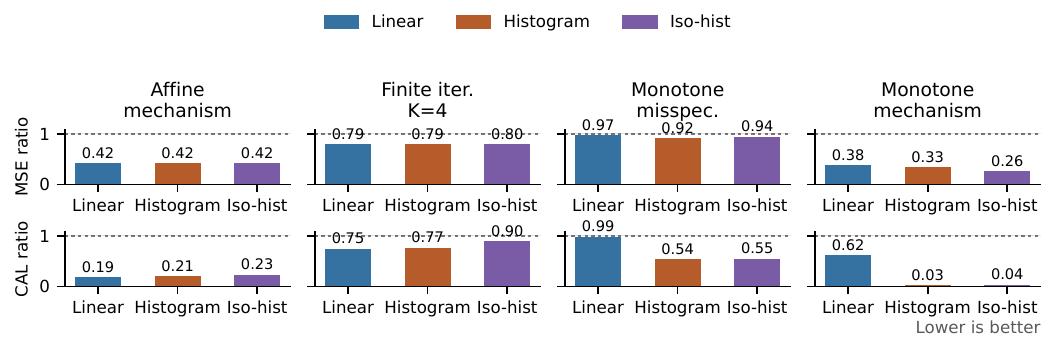}
  \vspace{-0.25em}
  \caption{\textbf{Post-hoc calibration improves mis-scaled value predictions.}
  Value MSE and Bellman calibration error ratios across synthetic failure
  modes, relative to the matched raw baseline.}
  \vspace{-0.5em}
  \label{fig:calibration-story}
\end{figure}

Figure~\ref{fig:calibration-story} reports ratios to the matched raw baseline.
Linear calibration corrects affine scale errors, while histogram and
isotonic-histogram calibration give the strongest gains under nonlinear
monotone misspecification. With finite Bellman-iteration error, all three
post-hoc maps reduce true-\(V\) MSE by about \(20\%\).
Table~\ref{tab:main-simulation-results} (Appendix~\ref{app:tablemain}) gives
the complementary audit, including temporal reward shift.

As an external check, we calibrate frozen fitted \(Q\)-functions on Deep OPE
Hopper-medium. Each \(Q\)-function is trained once and held fixed; only the
one-dimensional map \(g\) is fit on calibration trajectories, with evaluation on
disjoint diagnostic trajectories. We calibrate \(\widehat Q(S,A)\) on logged
state-action pairs, and each Bellman backup averages over target-policy next
actions. Figure~\ref{fig:hopper-q-calibration} summarizes a neural-FQE run with
1000 gradient updates, 20 random trajectory splits, and all eleven target
policies. Isotonic-histogram calibration reduces plug-in and debiased Bellman
calibration error to 0.787 and 0.771 of the raw \(Q\)-function, while
\(Q\)-Bellman MSE and scalar OPE error are nearly unchanged (1.002 and 0.978).
Thus, for a stronger frozen \(Q\)-function, calibration mainly improves Bellman
reliability rather than inducing artificial MSE gains.
Appendix~\ref{app:hopper-q-calibration} also reports a 100-update neural-FQE
stress test, where the deliberately undertrained \(Q\)-function leaves
substantial scale/projection error that post-hoc calibration can correct.

\section{Conclusion}
Bellman calibration provides an estimable, post-hoc reliability criterion:
predictions should agree, on average, with target-policy Bellman backups among
states assigned similar values. Iterated Bellman Calibration turns this
criterion into a one-dimensional recalibration procedure that controls
calibration error without Bellman completeness or value-function realizability.
Appendix~\ref{app:concl} discusses takeaways, coverage and nuisance-estimation
limitations, and future directions.

\bibliographystyle{plainnat}
\bibliography{ref}

\appendix

\section{Table and figure for experiment}
\label{app:tablemain}
\begin{table*}[h]
\centering
\caption{\textbf{Value-scale correction across offline RL failure modes.}
Ratios are relative to the matched uncalibrated baseline. Win rate is the
seed-matched true-\(V\) MSE win rate.}
\label{tab:main-simulation-results}
\scriptsize
\setlength{\tabcolsep}{4pt}
\begin{tabular}{@{}p{0.45\linewidth}lrrr@{}}
\toprule
Regime and learner & Calibrator & Relative \(V\) MSE & Relative cal. err. & Win rate \\
\midrule
Affine misspec., linear FQE & Linear & 0.750 & 0.631 & 0.94 \\
Affine misspec., linear FQE & Isotonic & 0.734 & 0.764 & 0.94 \\
Finite iteration, RF FQE \(K=4\) & Linear & 0.780 & 0.703 & 0.94 \\
Finite iteration, RF FQE \(K=4\) & Isotonic & 0.776 & 0.736 & 0.94 \\
Temporal reward shift, RF FQE & Linear recent & 0.630 & 0.717 & 0.97 \\
Temporal reward shift, RF FQE & Isotonic recent & 0.627 & 0.916 & 0.98 \\
Monotone misspec., linear FQE & Linear & 0.972 & 0.986 & 0.64 \\
Monotone misspec., linear FQE & Isotonic & 0.916 & 0.537 & 0.79 \\
Monotone misspec., linear FQE & Histogram & 0.917 & 0.544 & 0.77 \\
\bottomrule
\end{tabular}
\end{table*}

\begin{figure*}[h]
  \centering
  \includegraphics[width=0.98\textwidth]{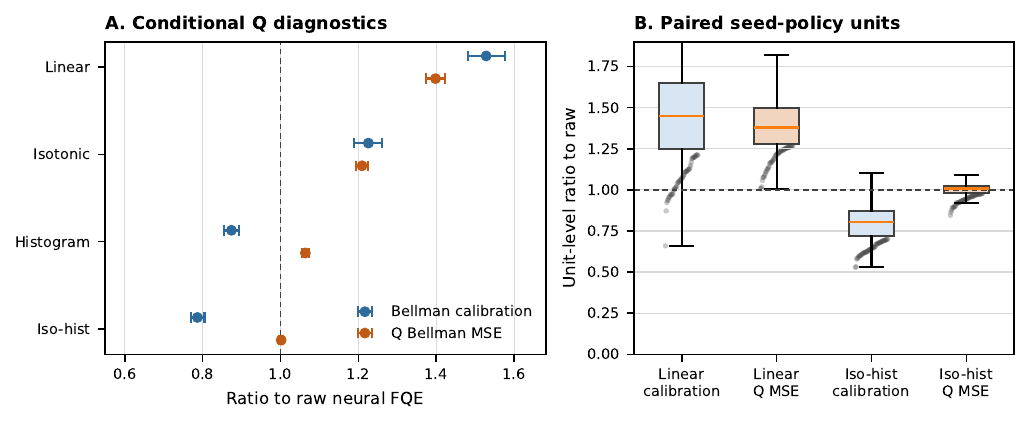}
  \caption{\textbf{Post-hoc calibration improves Bellman reliability for a
  frozen neural-FQE \(Q\)-function on Hopper-medium.}
  This benchmark uses a 1000-update neural-FQE \(Q\)-function, 20 random
  trajectory splits, and all eleven official Hopper-medium target policies.
  Panel A reports ratios to the matched raw \(Q\)-function; values below one
  improve on the frozen \(Q\)-function, and intervals are paired bootstrap
  confidence intervals clustered by seed-policy pair. Panel B shows paired
  unit-level ratios for the simple linear calibrator and isotonic-histogram
  calibration. In this more trained setting, calibration improves Bellman
  calibration error while leaving \(Q\)-Bellman MSE approximately unchanged.}
  \label{fig:hopper-q-calibration}
\end{figure*}

\paragraph{Interpretation of the Hopper result.}
Figure~\ref{fig:hopper-q-calibration} also illustrates the role of the
calibration class. Linear and isotonic calibration do not improve this
better-trained neural-FQE \(Q\)-function; in fact, both can worsen the Bellman
diagnostics. This is not a contradiction of the theory. Our guarantees apply to
histogram-style calibrators with a fixed score partition, for which the
calibrated class is closed under the fitted Bellman update on that partition.
A fixed affine map, and a globally monotone isotonic map of the scalar score
\(\widehat Q(S,A)\), do not have this fixed-partition Bellman-completeness
property. After the target-policy Bellman backup, the induced correction may
fall outside these calibration classes, producing residual Bellman
misspecification. Histogram and isotonic-histogram calibration satisfy the
fixed-partition structure used in the theory. Consistent with this, the
histogram-style calibrators give the reliable Hopper gains: they reduce Bellman
calibration error for the frozen \(Q\)-function, while scalar OPE error and
\(Q\)-Bellman MSE remain approximately neutral.

\section{Conclusion and Future Work}
\label{app:concl}

We introduced Bellman calibration as a post-hoc reliability criterion for
long-horizon value prediction. The criterion asks that predictions agree, on
average, with their target-policy Bellman backups within groups of states
assigned similar predicted values. This is weaker than statewise Bellman
consistency, but strong enough to diagnose systematic value-scale errors and to
support model-agnostic recalibration of an already learned value predictor.

Iterated Bellman Calibration turns this criterion into a practical correction:
fit a one-dimensional calibration map using off-policy Bellman targets, then
iterate the induced calibrated Bellman update. Because the recalibration is
one-dimensional, histogram and isotonic variants can control Bellman calibration
error without Bellman completeness or value-function realizability. The
value-error bounds separate statistical calibration error, finite-iteration
error, nuisance estimation error, and the approximation error of the best
predictor expressible as a function of the original value estimate. Thus
calibration is most useful when the base predictor is informative but
mis-scaled, and its limits are explicit when the original prediction discards
state information needed for accurate value prediction.

The main practical challenges are coverage, nuisance estimation, and
extrapolation. Off-policy Bellman calibration requires enough overlap to
estimate target-policy Bellman backups, and poor coverage can make
importance-weighted or doubly robust targets unstable. In limited-coverage
settings, it can be preferable to calibrate an action-value function directly:
treat \(X=(S,A)\) as the prediction input, fit the calibration map to
\(\widehat Q(S,A)\), and average only over target-policy next actions in the
backup. This avoids the additional propensity or density-ratio weighting needed
to calibrate an implied value function under the target-policy state
distribution. The resulting guarantee is a \(Q\)-calibration guarantee, not a
scalar policy-value guarantee: it assesses and corrects the frozen
\(Q\)-function on the logged state-action distribution before any aggregation
into an OPE estimate.

Future work includes combining Bellman calibration with stationary-density-ratio
or emphatic-style weighting
\citep{vdLaanKallus2025FQE,mahmood2015emphatic,patterson2022generalized},
developing smoother calibrators for extrapolation under distribution shift
\citep{platt1999probabilistic,jiang2011smooth,guo2017calibration}, and studying
calibration as both a diagnostic and post-hoc correction in larger offline RL
benchmarks.

\section{Additional Experimental Details}
\label{app:experiments}

For the non-temporal-shift experiments, we use 5-fold cross-calibration
\citep{van2023causal}. In each fold, the value learner and calibrator are
trained out of fold, and calibrated predictions are evaluated on the held-out
fold. Fold-specific calibrated predictors are aggregated by pointwise medians.
The uncalibrated comparator uses the same base learner refit on the full
training sample. Calibration is fit by importance-weighted fitted value
iteration, using the target-to-behavior action-probability ratio as weights. In
the temporal-shift experiment, we train FQE on a large old reward-regime batch,
fit the calibration map on a small held-out current-regime batch, and evaluate
only on current-regime states. All main results use 100 independent replications
and independent evaluation data.

\paragraph{Data-generating processes.}
All four simulation panels use discounted MDPs with continuous states
\(S_t\in\mathbb{R}^d\), actions \(A_t\in\{1,2,3\}\), and softmax behavior and
target policies. The target policy has logits \(s^\top w_a^\pi/\tau_\pi\). The
behavior policy uses shifted logits \(s^\top w_a^b/\tau_b\), where
\(w_a^b=w_a^\pi-\Delta u_a\). The main experiments use the moderate-shift
setting. Initial states are Gaussian, \(S_0\sim N(\mu_0,I_d)\). Except where
noted below, transitions and rewards are
\[
  S_{t+1}
  =
  \tanh\{M S_t+\alpha_{A_t}+0.22\sin(S_t)+0.08\cos(S_t^{\mathrm{rev}})\}
  + \sigma_\varepsilon \varepsilon_t ,
\]
and
\[
  r_0(s,a)
  =
  s^\top\beta
  +0.65\sin(s_1)
  +0.35\cos(s_2)
  +s^\top\eta_a
  +\rho_a
  -0.06\|s\|_2^2 .
\]
Observed rewards add mean-zero Gaussian noise. The ground-truth value \(V^\pi(s)\) is
computed by independent target-policy Monte Carlo rollouts using the conditional
mean reward. Training transitions, calibration folds, Bellman-evaluation
transitions, initial evaluation states, and Monte Carlo rollouts use independent
seeds.

The four panels isolate different value-scale errors. In the affine
misspecification panel, the same nonlinear MDP is used, but the reward is
transformed to
\[
  r_{\mathrm{aff}}(s,a)=1.25+0.7\,r_0(s,a).
\]
The learner is linear FQE with a deliberately restricted feature class. This
produces informative value predictions with an affine slope/intercept error, so
linear calibration should be sufficient.

In the finite-iteration panel, the dynamics are made persistent,
\[
  S_{t+1}=0.92S_t+\alpha_{A_t}+0.03\sin(S_t)+\sigma_\varepsilon\varepsilon_t,
\]
with \(\gamma=0.985\) and horizon \(120\). Rewards are smooth but long-horizon:
\[
  r_{\mathrm{fin}}(s,a)
  =
  0.8+0.9\tanh(s_1)+0.25s_2+\rho_a
  +0.15\tanh(s_1+0.5s_2)+s^\top\eta_a .
\]
Random-feature FQE is stopped after \(K=4\) Bellman iterations, so its values
retain useful ranking information but are under-scaled relative to the
long-horizon target.

In the temporal reward-shift panel, the old and current environments share
dynamics, policies, and state distribution. Only the reward changes:
\[
  r_{\mathrm{current}}(s,a)=2.0+1.5\,r_{\mathrm{old}}(s,a).
\]
We train random-feature FQE on a large old-regime behavior batch
(\(n_{\mathrm{old}}=2000\)), fit the calibration map on a small recent
current-regime batch (\(n_{\mathrm{recent}}=100\)), and evaluate on independent
current-regime ground-truth values and Bellman-calibration transitions. A same-size
current-regime retrain comparator uses the same learner.

In the monotone-misspecification panel, the nonlinear reward is transformed by
a monotone saturation,
\[
  r_{\mathrm{mono}}(s,a)
  =
  \operatorname{sign}\{r_0(s,a)\}\sqrt{|r_0(s,a)|+1}.
\]
The linear FQE value prediction remains sufficiently ordered to be useful but has a
nonlinear value-scale distortion. This is the setting where isotonic and
histogram calibration should improve over an affine correction.

\paragraph{Baseline value predictors.}
Each method first fits a \(Q\)-function from the offline behavior-policy data
and then forms
\[
  \widehat V(s)=\sum_a \pi(a\mid s)\widehat Q(s,a).
\]
The experiments use random-feature FQE and linear FQE because they give
transparent error mechanisms: finite Bellman-iteration bias and affine or
monotone value-scale misspecification.

\paragraph{Cross-calibration.}
For fold \(k\), the \(Q\)-learner is fit on the other folds. The held-out fold
supplies \((\widehat V^{(-k)}(S_i),\widehat V^{(-k)}(S_i'),R_i)\). These
out-of-fold tuples are pooled to fit a single value-space calibration map by
importance-weighted fitted value iteration. At evaluation time, calibrated
values are aggregated as
\[
  \widehat V_{\mathrm{cal}}(s)
  =
  \mathrm{median}_{k=1,\ldots,K_{\mathrm{fold}}}
  g\!\left(\widehat V^{(-k)}(s)\right).
\]
The matched uncalibrated comparator is a \(Q\)-learner refit on all available
training data.

\paragraph{Value-space fitted value iteration.}
Let \(\omega_i=\pi(A_i\mid S_i)/b(A_i\mid S_i)\). In all reported runs the
action-ratio weights are clipped at 20 and normalized to mean one within each
calibration fit. Starting from the identity map, calibration iterates
\[
  g_{t+1}\in\arg\min_g\sum_i \omega_i
  \left[g(\widehat V(S_i))-\{R_i+\gamma g_t(\widehat V(S_i'))\}\right]^2.
\]
The implemented calibrators are weighted affine regression, weighted histogram
binning, weighted isotonic regression, and a weighted isotonic-histogram
hybrid.

\paragraph{Metrics.}
The scalar OPE estimate is the mean of \(g(\widehat V(S_0))\) over independent
initial states. The main prediction metric is held-out value-function MSE,
\[
  \mathbb{E}\!\left[\{g(\widehat V(S))-V^\pi(S)\}^2\right],
\]
estimated using independent target-policy ground-truth values. Bellman calibration
error is estimated on independent Bellman-evaluation transitions by binning
\(g(\widehat V(S))\) into quantile bins and averaging the weighted squared gaps
between bin-mean predictions and bin-mean Bellman outcomes
\[
  r_0(S,A)+\gamma g(\widehat V(S')) .
\]
We use independent evaluation batches of size 50,000 and 50 bins.

\section[Hopper Q-Function Calibration Benchmark]{Hopper \(Q\)-Function Calibration Benchmark}
\label{app:hopper-q-calibration}

\paragraph{Purpose and data.}
The Hopper experiment asks whether post-hoc calibration also helps fitted
\(Q\)-functions in a real offline-RL benchmark. We use
\texttt{hopper-medium-v0} from the Deep OPE benchmark, together with all eleven
official Hopper-medium target policies. The reported run uses 20 random
trajectory splits, eleven target policies, and 220 completed
neural-FQE policy--seed fits. For each seed, every calibrator uses the
same trajectory split: 60\% for training the base \(Q\)-function, 20\% for
fitting the post-hoc calibration map, and 20\% for held-out diagnostics.

The comparison is controlled in three ways. First, each base \(Q\)-function is
trained only once, using only the training trajectories, and is then held fixed.
Second, all calibrators are fit on the same calibration trajectories and
evaluated on the same disjoint diagnostic trajectories. Third, no official
policy-return labels are used to train the \(Q\)-functions or to fit the
calibration maps; the official returns are used only for the secondary scalar
OPE diagnostic.

\paragraph{\(Q\)-function calibration protocol.}
Each base learner estimates a \(Q\)-function \(\widehat Q(s,a)\) from the
training trajectories. Calibration is then fit as a one-dimensional correction
to the scalar prediction \(\widehat Q(S_i,A_i)\) on logged state-action pairs.
For a candidate map \(g\), the Bellman target is
\[
  Y_i(g)=R_i+\gamma\,\mathbb E_{A'\sim\pi(\cdot\mid S_i')}
  g\{\widehat Q(S_i',A')\}.
\]
This is the direct \(Q\)-function analogue of value calibration. Value
calibration groups states by their predicted values \(\widehat V(S)\);
\(Q\)-function calibration groups logged state-action pairs by their predicted
values \(\widehat Q(S,A)\), then uses the target policy to average over the next
action \(A'\). After fitting \(g\), scalar OPE estimates can be formed by
averaging \(g\{\widehat Q(S_0,A_0)\}\) over target-policy initial actions, but
those scalar estimates are not used for training or calibration.

\paragraph{Why calibrate \(Q\) rather than the implied value.}
One could first average the fitted \(Q\)-function into an implied value
\(\widehat V(S)=E_{A\sim\pi(\cdot\mid S)}\widehat Q(S,A)\) and then calibrate
\(\widehat V\). In this benchmark, that would require evaluating Bellman
targets under the target-policy state distribution, which would introduce
propensity or density-ratio weighting from the logged policy to the target
policy. Hopper-medium has limited overlap for several target policies, so those
weights can be noisy.

The \(Q\)-function version avoids that extra source of variance. We use the
logged pair \((S_i,A_i)\) as the calibration input, score it by the single
number \(\widehat Q(S_i,A_i)\), and use the target policy only inside the next
action average \(A'\sim\pi(\cdot\mid S_i')\). Equivalently, this is value
calibration applied to the Markov chain whose state is the pair \(X=(S,A)\) and
whose next state is \((S',A')\). The calibrated Bellman target is
\[
  R+\gamma E_{A'\sim\pi(\cdot\mid S')}g\{\widehat Q(S',A')\}.
\]
Thus the same one-dimensional histogram, isotonic, and linear calibration maps
apply, but the diagnostic distribution is the logged state-action distribution.
Scalar OPE from \(g\{\widehat Q\}\) is reported only after calibration, as a
secondary check.

\paragraph{Base learner, calibrators, and tuning.}
The Hopper benchmark uses neural FQE as the nonlinear \(Q\)-function learner.
We report two frozen neural-FQE settings. The more trained setting uses a
two-layer 256-by-256 network, 1000 gradient updates, learning rate \(10^{-3}\),
target-network rate 0.005, and batch size 256. The lightweight stress-test
setting uses the same architecture with 100 gradient updates and learning rate
\(3\times10^{-4}\). Every frozen \(Q\)-function is evaluated with the same
post-hoc calibrator set: none, linear, isotonic, histogram, and
isotonic-histogram. There is no per-seed or per-policy retuning.

\paragraph{Metrics and weighting.}
The primary metrics are computed on the held-out diagnostic trajectories:
plug-in Bellman calibration error, debiased Bellman calibration error, and
\(Q\)-Bellman MSE. The reported calibration metrics use no propensity or
importance weighting; all final rows have \texttt{weighting = none}. Scalar OPE
absolute error against official policy returns is reported only as a secondary
sanity check. Uncertainty is computed by paired bootstrap intervals clustered by
\((\mathrm{seed},\mathrm{policy})\), and win rates compare each calibrated
method to its matched raw \(Q\)-function on the same seed-policy pair.

\paragraph{More trained neural FQE.}
With 1000 gradient updates, isotonic-histogram calibration reduces plug-in and
debiased Bellman calibration error to 0.787 and 0.771 of the raw fitted
\(Q\)-function. The corresponding \(Q\)-Bellman MSE is essentially unchanged
(1.002 of raw), and scalar OPE error is also roughly neutral (0.978). Histogram
calibration gives smaller calibration gains (0.874 plug-in, 0.860 debiased)
with a modest \(Q\)-MSE increase (1.064). Thus, once the neural FQE learner is
trained more extensively, the evidence is a reliability gain in Bellman
calibration rather than a large additional MSE gain.

\paragraph{Full audit.}
Figure~\ref{fig:hopper-q-audit} reports the lightweight 100-update neural-FQE
stress test. In this deliberately undertrained setting, calibration improves
the primary \(Q\)-Bellman diagnostics across linear, isotonic, histogram, and
isotonic-histogram maps, with especially large \(Q\)-MSE reductions. The
histogram-based maps are more flexible and can require careful bin tuning:
well-chosen bins can give effective post-hoc corrections, whereas overly fine
bins can produce noisy calibration maps when the calibration sample is limited.
The linear calibrator is therefore an important low-tuning baseline, and it
already gives large gains in this stress test.

\begin{figure*}[t]
  \centering
  \includegraphics[width=0.98\textwidth]{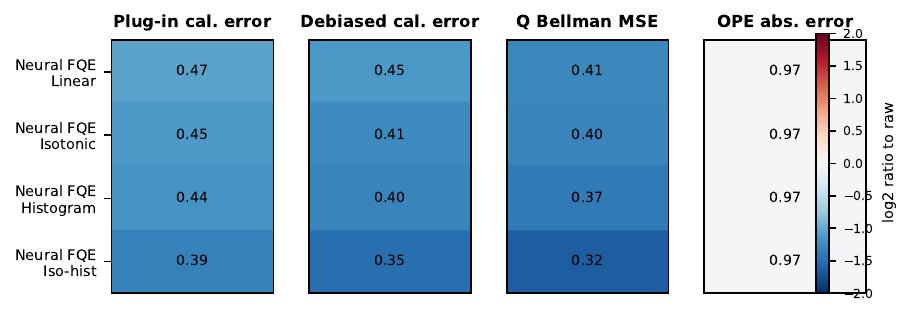}
  \caption{\textbf{Hopper lightweight neural-FQE \(Q\)-function calibration audit.}
  Entries are ratios to the matched raw \(Q\)-function. Values below one improve
  on the raw baseline, while values above one are regressions. This appendix
  audit uses the undertrained 100-update neural-FQE stress test and includes
  scalar OPE only as a secondary check.}
  \label{fig:hopper-q-audit}
\end{figure*}

\paragraph{Reproducibility.}
Both Hopper neural-FQE runs use \texttt{hopper-medium-v0}, all eleven official
Hopper-medium policies, 20 random trajectory splits, and 220 completed
policy--seed fits. For every seed and policy, the same 60/20/20 trajectory
split is reused across all calibrators, and the official policy returns are
used only for the secondary scalar OPE diagnostic.

\section{Additional details and extensions}

\subsection{Hybrid Isotonic--Histogram Calibration}
\label{appendix::hybrid}

Isotonic iterated Bellman calibration adaptively pools nearby predicted values
through the monotonicity constraint. This adaptivity is useful in practice, but
it complicates the estimation-error analysis because applying
Algorithm~\ref{alg::cal-gen} with \(\mathcal G=\mathcal G_{\mathrm{iso}}\) may
produce a different data-dependent partition at each iteration. The iterates
therefore need not correspond to a single aggregated Bellman operator.

The hybrid isotonic--histogram procedure separates partition selection from
Bellman iteration. It first performs one isotonic regression of the initial
Bellman target as a function of \(\hat v(S)\), extracts the flat regions of the
fitted isotonic function as bins, and then runs histogram-based Bellman
calibration on this fixed partition. Thus the method retains the adaptivity of
isotonic calibration while restoring the fixed aggregated-operator structure
used in the histogram analysis.

\begin{algorithm}[t]
\caption{\textbf{Iso--Hist Iterated Bellman Calibration}}
\label{alg::iso-hist}
\begin{algorithmic}[1]

\BCINPUT Value predictor $\hat{v}$, calibration data $\mathcal C_n$,
Bellman target-estimate map $v\mapsto\widehat Y_v$, iterations $K$

\STATE {\small\textbf{Stage 1: fit outcome-adaptive partition}}
\STATE \hspace{1em} $\widehat Y_i^{(0)} \gets \widehat Y_{\hat v}(O_i)$
\STATE \hspace{1em} $g_{\mathrm{iso}} \gets
       \arg\min_{g \in \mathcal G_{\mathrm{iso}}}
       \sum_{i=1}^n (\widehat Y_i^{(0)} - g(\hat{v}(S_i)))^2$
\STATE \hspace{1em} Extract bins $\hat I_1,\dots,\hat I_{\hat B}$
       from flat regions of $g_{\mathrm{iso}}$
\STATE \hspace{1em} Define $\widehat{\mathcal G}_{\hat B} :=
       \{g:g  \text{ is constant on each }\hat I_b\}$

\STATE {\small\textbf{Stage 2: histogram calibration}}
\STATE \hspace{1em} Apply Algorithm~\ref{alg::cal-gen} with class
       $\widehat{\mathcal G}_{\hat B}$ to obtain $V_K$

\BCOUTPUT $V_K$

\end{algorithmic}
\end{algorithm}

Once the partition is fixed, the guarantees of Section~\ref{sec:hist} apply
directly, provided the number of bins produced by the initial isotonic step
admits a deterministic high-probability bound. More generally, the same
argument applies to other data-adaptive binning methods that produce a fixed
partition before the Bellman iterations begin.

\begin{theorem}[Calibration and Estimation Error for Alg.~\ref{alg::iso-hist}]
\label{theorem::regretiso}
Let $V_K$ denote the output of Algorithm~\ref{alg::iso-hist}. Suppose
the isotonic regression step produces at most $\hat B$ bins, where
$\hat B \le B_n$ for some deterministic sequence $B_n$ with probability at least $1 - \delta$. Assume \ref{cond::C1}--\ref{cond::Ccoverage} and \ref{cond::C3} with $B = B_n$.
Then, with probability at least $1 - 2\delta$, the guarantees in
Theorems~\ref{theorem::Calerrorhist} and~\ref{theorem::regretHist} hold for
$V_K$ with $B := B_n$.
\end{theorem}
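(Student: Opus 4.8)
The plan is to condition on the high-probability event that the Stage-1 isotonic step produces few bins, and then recognize Stage~2 as exactly histogram iterated Bellman calibration with a data-adaptive but deterministically size-bounded partition, to which Theorems~\ref{theorem::Calerrorhist} and~\ref{theorem::regretHist} apply essentially verbatim. Let $E_1 := \{\hat B \le B_n\}$, which by hypothesis has probability at least $1-\delta$. On $E_1$, Stage~1 returns a monotone partition $\{\hat I_1,\dots,\hat I_{\hat B}\}$ of the range of $\hat v$ into at most $B_n$ contiguous cells, depending on $\mathcal C_n$ only through $\{\hat v(S_i),\hat\chi_i\}_{i=1}^n$; and Stage~2 runs Algorithm~\ref{alg::cal-gen} with the piecewise-constant calibrator class $\widehat{\mathcal F}_{\hat B}$ determined by this partition, \emph{held fixed across all $K$ iterations}. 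Hence the output of Algorithm~\ref{alg::iso-hist} coincides with that of histogram iterated Bellman calibration run with a data-adaptive partition of size at most $B_n$.

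Next I would check that the proofs of Theorems~\ref{theorem::Calerrorhist} and~\ref{theorem::regretHist} use the partition only through two ingredients, both of which are insensitive to how the cells were chosen once their number is controlled. First, the statistical terms come from a uniform empirical-process bound over $\{\theta\circ\hat v:\theta\in\mathcal F_B\}$; the $L^2$ metric entropy of piecewise-constant functions with at most $B$ pieces bounded by $M$ is $O(B\log(n/B))$ regardless of the breakpoints, so the bound is uniform over all monotone partitions into $\le B_n$ cells (equivalently, one may union-bound over the at most $n^{B_n}$ isotonic partitions, which only inflates constants since $B_n\log n\lesssim B_n\log(n/B_n)$ for $B_n\lesssim n^{1-c}$). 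This is precisely the mechanism by which Section~\ref{sec:hist} already permits empirical-quantile bins, and it covers the $\mathcal C_n$-adaptive Stage-1 partition equally well. Second, the $\gamma^K$ finite-iteration term and the convergence-to-$\hat v_{0,\hat B}$ claim rest on the coarsened operator being Bellman complete on $\widehat{\mathcal F}_{\hat B}$ and on the partition being identical at every iteration; Stage~2 of Algorithm~\ref{alg::iso-hist} uses one fixed partition by construction, so this carries over unchanged. The doubly robust nuisance terms are handled exactly as in Theorem~\ref{theorem::DRpseudo} and involve no partition at all.

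It then remains to absorb two cosmetic points. (a) Theorems~\ref{theorem::Calerrorhist} and~\ref{theorem::regretHist} are stated for a partition with exactly $B$ cells, but here $\hat B\le B_n$; the entropy bound for $\le B_n$ pieces is of the same order $O(B_n\log(n/B_n))$, and every remaining step is phrased in terms of the projection, the coarsened operator, and empirical bin means, none of which requires equality of the bin count to $B_n$. (b) On $E_1$ the realized complexity parameter is $\hat B$, not $B_n$; since $B\mapsto\sqrt{(B/n)\log(n/B)}$ is increasing for $B< n/e$, the statistical rates may be upper-bounded by their $B_n$ versions, yielding the stated bounds with $B:=B_n$ (the approximation and coverage factors $\Pi_{\hat v,\hat B}$, $\hat\kappa_{\hat B}$, $\hat v_{0,\hat B}$ are those of the realized partition $\widehat{\mathcal F}_{\hat B}$, as in the theorem statements). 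A final union bound over $E_1$ and over the probability-$(1-\delta)$ event on which the histogram bounds hold gives the conclusion with probability at least $1-2\delta$.

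The main obstacle is the first ingredient: making the empirical-process control genuinely uniform over the Stage-1 partition rather than over a pre-specified one. This uniformity is already implicit in the Section~\ref{sec:hist} analysis (which allows data-adaptive bins with deterministic count), so the only extra care needed is to intersect with $E_1$ \emph{before} invoking it, so that the number of cells---hence the complexity of $\widehat{\mathcal F}_{\hat B}$---is deterministically bounded when the uniform bound is applied. Everything else is bookkeeping.
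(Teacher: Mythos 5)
Your proposal is correct and takes essentially the same route as the paper: condition on the event $\hat B \le B_n$, note that the entropy bound of Lemma~\ref{lemma::entropynumbers} holds for the class of \emph{all} piecewise-constant functions with at most $B_n$ segments regardless of where the breakpoints fall (so the data-adaptive, but fixed-across-iterations, Stage-1 partition is covered), and finish with a union bound over the two probability-$(1-\delta)$ events. The paper's proof is a terser statement of exactly this argument.
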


Existing isotonic regression theory suggests that the effective number of
induced bins satisfies \(\widehat B=O(n^{1/3})\) when the calibration function
is continuously differentiable \citep{deng2021confidence}; this rate can also
be enforced by directly constraining the number of constant segments
\citep{van2024self}. For the approximation term in
Theorem~\ref{theorem::regretHist} to remain comparable to the initial error,
the induced histogram class must also approximate the identity map on the range
of \(\hat v\). This is plausible when \(\hat v\) is already close to \(v_0\);
when \(\hat v\) is substantially miscalibrated, isotonic regression may instead
pool poorly predicted regions into a common calibrated value.

\subsection{Alternative Calibration Targets}
\label{app:alternative-calibration-targets}

The Bellman calibration criterion studied in the main text calibrates a
candidate value predictor \(v\) with respect to its own Bellman target,
\[
E_\pi\!\left[R+\gamma v(S')\mid v(S)\right].
\]
A stronger alternative would instead calibrate \(v\) directly against the true
value function \(v_0\). Specifically, we say that \(v\) is
\emph{strongly Bellman calibrated} if
\[
v(S)
=
E\!\left[v_0(S)\mid v(S)\right]
\quad\text{almost surely}.
\]
Equivalently, among states assigned the same predicted value, the prediction
agrees on average with the true long-run return under \(\pi\).

This target is conceptually appealing, but substantially less practical. It
requires estimating the conditional mean of the unknown quantity \(v_0(S)\)
given the predicted value \(v(S)\). In general, this conditional mean is not directly
available from one-step transitions alone and is typically no easier to estimate
than the underlying value function itself
\citep{kallus2020double,kallus2022efficiently}. We therefore focus on the
weaker but directly attainable Bellman-target calibration criterion developed
in the main text.

\section{Stationary Measures and Contraction Results}
\label{appendix::stationary}

\begin{lemma}[Bellman contraction under a stationary measure]
\label{lemma::contraction}
Let $P$ be a Markov operator with stationary distribution $\mu$ (i.e., $\mu=\mu P$),
and define the Bellman operator $\mathcal T f := r + \gamma P f$.
Then $P$ is nonexpansive in $L^2(\mu)$:
\[
\|P(f-g)\|_{2,\mu} \;\le\; \|f-g\|_{2,\mu}.
\]
Consequently, $\mathcal T$ is a $\gamma$-contraction:
\[
\|\mathcal T (f-g)\|_{2,\mu}
\;\le\;
\gamma\,\|f-g\|_{2,\mu}.
\]
\end{lemma}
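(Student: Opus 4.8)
The plan is to reduce the contraction claim to a single structural fact: $P$, being a conditional expectation, is nonexpansive on $L^2(\mu)$ whenever $\mu$ is invariant. Write $h := f - g$. As a Markov operator, $P$ acts by $(Ph)(s) = \mathbb{E}[h(S') \mid S = s]$ with $S' \sim P(\cdot \mid s)$. Since $t \mapsto t^2$ is convex, conditional Jensen gives the pointwise bound $(Ph)(s)^2 \le (P h^2)(s)$ for $\mu$-a.e.\ $s$, where $h^2$ denotes the function $s \mapsto h(s)^2$. (It suffices to assume $f,g \in L^2(\mu)$ so that $h^2$ is $\mu$-integrable and the manipulations below are justified.)

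Next I would integrate this pointwise inequality against $\mu$ and invoke stationarity. This yields
\[
\|Ph\|_{2,\mu}^2
= \int (Ph)(s)^2\,\mu(ds)
\le \int (P h^2)(s)\,\mu(ds)
= \int h^2\,d(\mu P)
= \int h^2\,d\mu
= \|h\|_{2,\mu}^2,
\]
where the third equality is the definition of the (left) action of the kernel $P$ on the measure $\mu$, and the fourth is the hypothesis $\mu P = \mu$. Taking square roots gives $\|P(f-g)\|_{2,\mu} \le \|f-g\|_{2,\mu}$, i.e.\ nonexpansiveness.

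Finally, for the contraction bound on $\mathcal T$, note that the reward term cancels in the difference: $\mathcal T(f) - \mathcal T(g) = (r + \gamma Pf) - (r + \gamma Pg) = \gamma\,P(f-g)$. Combining with the previous display, $\|\mathcal T(f-g)\|_{2,\mu} = \gamma\,\|P(f-g)\|_{2,\mu} \le \gamma\,\|f-g\|_{2,\mu}$, as claimed. I do not expect any genuine obstacle: the only mild care needed is integrability of $h^2$ (handled by the $L^2(\mu)$ assumption) and measurability of $s \mapsto (Ph)(s)$, which is standard for Markov kernels. The lemma is essentially the statement that conditional averaging followed by integration against an invariant measure cannot increase the $L^2$ norm, and the $\gamma$-contraction is an immediate corollary since $\mathcal T$ differs from $\gamma P$ only by a constant shift that drops out under differencing.
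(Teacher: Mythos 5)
Your proof is correct and follows essentially the same route as the paper's: conditional Jensen gives the pointwise bound $(Ph)^2 \le P(h^2)$, integrating against the stationary measure yields nonexpansiveness, and the reward term cancels in the difference to give the $\gamma$-contraction. The only difference is that you make the integrability and measurability caveats explicit, which the paper leaves implicit.
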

\begin{proof}
Because $\mu$ is stationary for $P$, the operator $P$ is nonexpansive in
$L^2(\mu)$. Jensen's inequality implies
\[
(P h)^2 \le P(h^2) \qquad\text{pointwise}.
\]
Integrating both sides with respect to $\mu$ and using $\mu = \mu P$ yields
\[
\|P h\|_{2,\mu}^2
= \int (P h)^2 \, d\mu
\le \int P(h^2)\, d\mu
= \int h^2\, d\mu
= \|h\|_{2,\mu}^2.
\]
Now apply this with $h = f - g$. Since the reward function $r$ cancels,
\[
\mathcal{T}f - \mathcal{T}g
= \gamma\, P(f-g).
\]
Therefore,
\[
\|\mathcal{T} f - \mathcal{T} g\|_{2,\mu}
= \gamma \|P(f-g)\|_{2,\mu}
\le \gamma \|f-g\|_{2,\mu},
\]
which proves the claim.
\end{proof}

\begin{lemma}[Stationarity under coarsening]
\label{lemma:aggregated-stationary}
Let $P$ be a Markov kernel on $\mathcal S$ and let $\lambda$ be stationary for $P$
(i.e., $\lambda P = \lambda$).
For any measurable coarsening map $g:\mathcal S\to\mathbb R$, define
\[
P_g f(s)
:=
\mathbb{E}[\,f(S') \mid g(S)=g(s)\,],
\]
where $(S,S')$ has joint law $\lambda(ds)\,P(s,ds')$.
Then $\lambda$ is stationary for $P_g$.
\end{lemma}

\begin{proof}
Let $(S,S')\sim \lambda(ds)\,P(s,ds')$, so $S\sim\lambda$ and $S'\mid S\sim P(S,\cdot)$.
Then
\begin{align*}
\int P_g f(s)\,\lambda(ds)
&=
\mathbb{E}_{\lambda,P}\!\left[P_g f(S)\right] \\[0.3em]
&=
\mathbb{E}_{\lambda,P}\!\left[\mathbb{E}_{\lambda,P}\!\left[f(S') \mid g(S)\right]\right] \\[0.3em]
&=
\mathbb{E}_{\lambda,P}[f(S')].
\end{align*}
Since $\lambda$ is stationary for $P$, the marginal of $S'$ is again $\lambda$, hence
$\mathbb{E}_{\lambda,P}[f(S')] = \int f(s)\,\lambda(ds)$.
\end{proof}

\section{Calibration--Refinement Proofs}

In this section, set
\[
\mathcal B_{\pi,\hat v}q
:=
\Pi_{\hat v}\mathcal T_\pi q,
\qquad
V_{\hat v}^\star\in\mathcal V_{\hat v},
\qquad
V_{\hat v}^\star
=
\mathcal B_{\pi,\hat v}V_{\hat v}^\star .
\]

\begin{theorem}[Sharper calibration--refinement bound]
\label{theorem::calrefineOracle}
Under \ref{cond::A1},
\[
\|\hat v-v_0\|_{2,\rho}
\le
\|\hat v-V_{\hat v}^\star\|_{2,\rho}
+
\frac{\sqrt{\mathcal C_{\hat v,\gamma}}}{1-\gamma}
\min_{\theta:\mathbb R\to\mathbb R}
\|\theta\circ\hat v-v_0\|_{2,\rho}.
\]
\end{theorem}

\begin{proof}[Proof of Theorem \ref{theorem::calrefineOracle}]
Define \(P_{\pi,\hat v} := \Pi_{\hat v}P_{\pi}\) and
\(r_{\pi,\hat{v}} := \Pi_{\hat v} r_{\pi}\). Because \(v_0\) and
\(V_{\hat v}^\star\) satisfy the fixed-point equations
\[
v_0 = \mathcal T_\pi v_0,
\qquad
V_{\hat v}^\star = \mathcal B_{\pi,\hat v} V_{\hat v}^\star
= r_{\pi,\hat{v}} + \gamma\,P_{\pi,\hat v}V_{\hat v}^\star,
\]
we begin by writing
\[
V_{\hat v}^\star - v_0
=
\bigl( \mathcal B_{\pi,\hat v}V_{\hat v}^\star
      - \mathcal B_{\pi,\hat v}v_0 \bigr)
+
\bigl( \mathcal B_{\pi,\hat v}v_0 - \mathcal{T}_\pi v_0 \bigr).
\]
Using linearity on differences,
\[
\mathcal B_{\pi,\hat v}V_{\hat v}^\star
-
\mathcal B_{\pi,\hat v}v_0
=
\gamma\,P_{\pi,\hat v}(V_{\hat v}^\star - v_0),
\]
so
\[
V_{\hat v}^\star - v_0
=
\gamma\,P_{\pi,\hat v}(V_{\hat v}^\star - v_0)
+
\bigl(\mathcal B_{\pi,\hat v}v_0 - v_0\bigr),
\]
where we used \(\mathcal T_\pi v_0 = v_0\) for the second term. Since
\(\mathcal B_{\pi,\hat v}v_0=\Pi_{\hat v}v_0\), this gives
\[
V_{\hat v}^\star-v_0
=
\gamma P_{\pi,\hat v}(V_{\hat v}^\star-v_0)
+
(\Pi_{\hat v}-I)v_0 .
\]
Let \(a:=(\Pi_{\hat v}-I)v_0\). Iterating the display yields
\[
V_{\hat v}^\star-v_0
=
\sum_{t=0}^\infty \gamma^t P_{\pi,\hat v}^t a .
\]
For each \(t\ge0\), set
\[
c_t
:=
\left\|
\frac{d(\rho P_{\pi,\hat v}^t)}{d\rho}
\right\|_\infty .
\]
Jensen's inequality implies
\[
\|P_{\pi,\hat v}^t a\|_{2,\rho}
\le
\sqrt{c_t}\|a\|_{2,\rho}.
\]
Therefore,
\[
\|V_{\hat v}^\star-v_0\|_{2,\rho}
\le
\frac{\sqrt{\mathcal C_{\hat v,\gamma}}}{1-\gamma}
\|\Pi_{\hat v}v_0-v_0\|_{2,\rho}.
\tag{$\star$}
\]
Since \(\Pi_{\hat v}\) is the \(L^2(\rho)\)-projection onto
\(\{\theta\circ\hat v:\theta:\mathbb R\to\mathbb R\}\),
\[
\|\Pi_{\hat v}v_0-v_0\|_{2,\rho}
=
\min_{\theta:\mathbb R\to\mathbb R}
\|\theta\circ\hat v-v_0\|_{2,\rho}.
\]
Finally, by the triangle inequality,
\[
\|\hat{v} - v_0\|_{2,\rho}
\le
\|\hat{v} - V_{\hat v}^\star\|_{2,\rho}
+
\|V_{\hat v}^\star - v_0\|_{2,\rho},
\]
Combining this display with \((\star)\) yields the stated result.
\end{proof}

\begin{proof}[Proof of Theorem \ref{theorem::calrefine}]
Let \(P_{\pi,\hat v} := \Pi_{\hat v}P_{\pi}\).
As in the proof of Theorem~\ref{theorem::calrefineOracle},
\[
\|V_{\hat v}^\star-v_0\|_{2,\rho}
\le
\frac{\sqrt{\mathcal C_{\hat v,\gamma}}}{1-\gamma}
\min_{\theta:\mathbb R\to\mathbb R}
\|\theta\circ\hat v-v_0\|_{2,\rho}.
\tag{$\star$}
\]
Moreover,
\[
\hat v - V_{\hat v}^\star
=
\hat v-\mathcal B_{\pi,\hat v}\hat v
+
\gamma P_{\pi,\hat v}(\hat v-V_{\hat v}^\star),
\]
and therefore
\[
\hat v - V_{\hat v}^\star
=
\sum_{t=0}^\infty \gamma^t
P_{\pi,\hat v}^t
\{\hat v-\mathcal B_{\pi,\hat v}\hat v\}.
\]
Since \(\mathcal B_{\pi,\hat v}\hat v=\Gamma_0(\hat v)\), the
concentrability argument above yields
\[
\|\hat v - V_{\hat v}^\star\|_{2,\rho}
\le
\frac{\sqrt{\mathcal C_{\hat v,\gamma}}}{1-\gamma}
\mathrm{Cal}_{\ell^2}(\hat v).
\tag{$\dagger$}
\]
The triangle inequality and displays \((\star)\) and \((\dagger)\) give the
claim.
\end{proof}

\section{Proof of Theorem \ref{theorem::DRpseudo}}

\begin{proof}[Proof of Theorem \ref{theorem::DRpseudo}]
Fix $v$ and treat the nuisance estimates as fixed functions. By definition,
\[
\widehat{\mathcal T}_\pi v(s)
=
\mathbb{E}\Big[
(\pi \hat q_v)(S)
+
\widehat w_{\pi}(A \mid S)
\big\{
  R + \gamma\, v(S') - \hat{q}_v(S,A)
\big\}
\;\Big|\; S = s
\Big].
\]
Since $(\pi \hat q_v)(S)$ depends only on $S$,
\[
\mathbb{E}\big[(\pi \hat q_v)(S)\mid S=s\big] = (\pi \hat q_v)(s).
\]
For the second term, using the definition of $b_0$ and $q_v := r_0 + \gamma P v$,
\begin{align*}
&\mathbb{E}\Big[
\widehat w_{\pi}(A \mid S)
\big\{
  R + \gamma\, v(S') - \hat{q}_v(S,A)
\big\}
\;\Big|\; S = s
\Big] \\
&\quad=
b_0\Big\{
\widehat w_\pi(\cdot \mid s)\,
\mathbb{E}\big[
R + \gamma\, v(S') - \hat q_v(s,\cdot)
\mid S = s, A = \cdot
\big]
\Big\}(s) \\
&\quad=
b_0\big\{\widehat w_\pi (q_v - \hat q_v)\big\}(s).
\end{align*}
Hence
\[
\widehat{\mathcal T}_\pi v(s)
=
(\pi \hat q_v)(s)
+
b_0\big\{\widehat w_\pi (q_v - \hat q_v)\big\}(s).
\]

Subtracting $(\mathcal T_\pi v)(s) = (\pi q_v)(s)$ yields
\begin{align*}
\widehat{\mathcal T}_\pi v(s) - (\mathcal T_\pi v)(s)
&=
(\pi \hat q_v - \pi q_v)(s)
+
b_0\big\{\widehat w_\pi (q_v - \hat q_v)\big\}(s) \\
&=
(\pi (\hat q_v - q_v))(s)
-
b_0\big\{\widehat w_\pi (\hat q_v - q_v)\big\}(s).
\end{align*}
Using $(\pi f)(s) = b_0\{w_\pi f\}(s)$ with $f = \hat q_v - q_v$ gives
\[
(\pi (\hat q_v - q_v))(s)
=
b_0\{w_\pi (\hat q_v - q_v)\}(s),
\]
so
\[
\widehat{\mathcal T}_\pi v(s) - (\mathcal T_\pi v)(s)
=
b_0\big\{(w_\pi - \widehat w_\pi)(\hat q_v - q_v)\big\}(s).
\]
Taking conditional expectations given \(v(S)=t\) proves the claim.
\end{proof}

\section{Notation and Maximal Inequalities}

Let $P_0$ denote the joint distribution of $(S, A, R, S')$ induced by the
behavior policy, and let $P_n$ denote the empirical measure of the calibration
sample $\mathcal{C}_n$.

We define empirical \(L^2\) norms with respect to the state and next-state samples.  For any state function \(f\), let
\[
\|f\|_{n,S}
:=
\Big( \tfrac{1}{n} \sum_{i=1}^{n} f(S_i)^2 \Big)^{1/2},
\quad
\|f\|_{n,S'}
:=
\Big( \tfrac{1}{n} \sum_{i=1}^{n} f(S_i')^2 \Big)^{1/2},
\]
where \(S_i\) and \(S_i'\) denote the observed states and next states, respectively, in the calibration sample \(\mathcal{C}_n\).

Further define
\[
\widehat{\mathcal{G}}
:=
\Big\{ (f_1 - f_2)\big(\widehat Y_{f_2} - f_2\big)
:\;
f_1, f_2 \in \mathcal H_B(\hat v) \Big\}.
\]
By assumption, both \(\hat{v}\) and the Bellman target-estimate map
\(v\mapsto\widehat Y_v\) are fixed conditional on the training data, which is
independent of the calibration sample \(\mathcal{C}_n\). Consequently, the
classes \(\mathcal H_B(\hat v)\) and \(\widehat{\mathcal{G}}\) are
non-random conditional on the training dataset.

For any distribution \(Q\) and any uniformly bounded function class \(\mathcal{F}\), let
\(N(\varepsilon, \mathcal{F}, L^2(Q))\) denote the \(\varepsilon\)-covering number of \(\mathcal{F}\) under the \(L^2(Q)\) norm \citep{van1996weak}.
Define the uniform entropy integral of \(\mathcal{F}\) by
\begin{equation*}
\mathcal{J}(\delta, \mathcal{F})
:=
\int_{0}^{\delta}
\sup_{Q}
\sqrt{\log N(\epsilon, \mathcal{F}, L^2(Q))}\, d\epsilon ,
\end{equation*}
where the supremum is taken over all discrete probability distributions \(Q\).

Finally, for two quantities \(x\) and \(y\), we write \(x \lessapprox y\) to mean that \(x\) is bounded above by \(y\) up to a universal constant that depends only on global constants appearing in our conditions.

\subsection{Local maximal inequality}

 Let $O_1,\ldots,O_n \in \mathcal{O}$ be independent random variables. For any function $f:\mathcal{O} \to \mathbb{R}$, define
\begin{align}
   \|f\|_{2,\bar P} := \sqrt{\frac{1}{n} \sum_{i=1}^n\mathbb{E}[f(O_i)^2]}.
\end{align}

For a star-shaped class of functions $\mathcal{F}$ and a radius $\delta \in (0,\infty)$,
define the localized Rademacher complexity
\[
\mathcal{R}_n(\mathcal{F}, \delta)
:=
\mathbb{E}\left[
\sup_{\substack{f \in \mathcal{F} \\\|f\|_{2,\bar P} \le \delta}}
\frac{1}{n} \sum_{i=1}^n \epsilon_i f(O_i)
\right],
\]
where $\epsilon_i$ are i.i.d.\ Rademacher random variables.

The following lemma provides a local maximal inequality and restates Lemma~11 of \cite{foster2023orthogonal}.

\begin{lemma}[Local maximal inequality]\label{lemma:loc_max_ineq}
Let $\mathcal{F}$ be a star-shaped class of functions satisfying
$\sup_{f\in\mathcal{F}} \|f\|_{\infty} \le M$.
Let $\delta > 0$ satisfy the critical radius condition
$\mathcal{R}_n(\mathcal{F},\delta) \le \delta^2$.
Suppose further that
$n^{-1/2}\sqrt{\log\log(1/\delta)} = o(\delta)$. Then there exists a universal constant $C>0$ such that, for all $u \ge 1$, with
probability at least $1 - e^{-u^2}$, every $f \in \mathcal{F}$ satisfies
\[
\frac{1}{n} \sum_{i=1}^n
\bigl(f(O_i) - \mathbb{E}[f(O_i)]\bigr)
\;\le\;
C\Bigl(
\delta^2
+
\delta\|f\|_{2,\bar P}
+
\frac{u\|f\|_{2,\bar P}}{\sqrt{n}}
+
\frac{M u^2}{n}
\Bigr).
\]
\end{lemma}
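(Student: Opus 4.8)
The plan is to prove this by a standard peeling (dyadic slicing) argument: decompose $\mathcal{F}$ into shells of geometrically increasing $L^2$-radius, control the supremum of the centered empirical process on each shell via a one-sided Talagrand-type concentration inequality, and bound the expected supremum on each shell through the localized Rademacher complexity together with the star-shaped structure of $\mathcal{F}$. Since the statement is quoted as a restatement of Lemma~11 in \citet{van2025nonparametric}, one may also simply invoke that result; the sketch below indicates why it holds.

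First I would record the consequence of star-shapedness that $r\mapsto\mathcal{R}_n(\mathcal{F},r)/r$ is non-increasing, so that the critical-radius hypothesis $\mathcal{R}_n(\mathcal{F},\delta)\le\delta^2$ upgrades to $\mathcal{R}_n(\mathcal{F},r)\le r\delta$ for every $r\ge\delta$. Since $\|f\|_\infty\le M$ forces $\|f\|\le M$, I would then write $\mathcal{F}=\mathcal{F}_0\cup\bigcup_{j=1}^{J}\mathcal{F}_j$ with $\mathcal{F}_0:=\{f\in\mathcal{F}:\|f\|\le\delta\}$ and $\mathcal{F}_j:=\{f\in\mathcal{F}:2^{j-1}\delta<\|f\|\le2^j\delta\}$, where $J$ is the least integer with $2^J\delta\ge M$, so $J\lesssim\log(M/\delta)$. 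On a shell $\mathcal{F}_j$, symmetrization gives $\mathbb{E}\sup_{f\in\mathcal{F}_j}\frac1n\sum_i\{f(O_i)-\mathbb{E}[f(O_i)]\}\le2\mathcal{R}_n(\mathcal{F},2^j\delta)\lesssim2^j\delta^2$, and since $\frac1n\sum_i\mathrm{Var}(f(O_i))\le\|f\|^2\le(2^j\delta)^2$ there, with envelope $M$, Talagrand's inequality (in Bousquet's one-sided form) yields, with probability at least $1-e^{-t_j}$,
\[
\sup_{f\in\mathcal{F}_j}\frac1n\sum_{i=1}^n\{f(O_i)-\mathbb{E}[f(O_i)]\}
\;\lesssim\;
2^j\delta^2+\frac{\sqrt{t_j}\,2^j\delta}{\sqrt n}+\frac{Mt_j}{n}.
\]
I would take $t_j:=u^2+2\log(j+1)$, so the union bound over $j=0,\dots,J$ costs only a constant factor in the failure probability; then, using $2^j\delta\le2\|f\|$ on $\mathcal{F}_j$ and $\sqrt{t_j}\le u+\sqrt{2\log(j+1)}$, each bound becomes $\lesssim\delta^2+\delta\|f\|+\frac{u\|f\|}{\sqrt n}+\frac{Mu^2}{n}$ plus residual terms of order $\frac{\sqrt{\log(j+1)}\,\|f\|}{\sqrt n}$ and $\frac{M\log(j+1)}{n}$. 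The small ball $\mathcal{F}_0$ is handled identically with radius $\delta$, using in addition $\frac{u\delta}{\sqrt n}\le\frac12(\delta^2+u^2/n)$. After enlarging the universal constant to absorb the loss of a constant factor in the failure probability, this gives the stated bound.

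The main obstacle — and primarily the place the regularity hypothesis enters — is disposing of the residual terms from the union bound over the $\asymp\log(1/\delta)$ shells. Since $j\le J$ gives $\log(j+1)\lesssim\log\log(1/\delta)$, the assumption $n^{-1/2}\sqrt{\log\log(1/\delta)}=o(\delta)$ (together with its square, $\log\log(1/\delta)/n=o(\delta^2)$) is exactly what makes $\frac{\sqrt{\log(j+1)}\,\|f\|}{\sqrt n}\lesssim\delta\|f\|$ and $\frac{M\log(j+1)}{n}\lesssim\delta^2$, so that these terms are absorbed. Care is also needed to use a version of Talagrand/Bousquet valid for independent but not necessarily identically distributed summands, and to keep the localization radius measured in the same population $L^2$ norm $\|\cdot\|$ that appears in the definition of $\mathcal{R}_n$. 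Modulo these points the argument is routine, and full details appear in \citet{van2025nonparametric}.
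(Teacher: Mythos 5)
The paper does not actually prove this lemma; it simply cites it as a restatement of Lemma~11 of \citet{van2025nonparametric}, so there is no internal proof to compare against. Your peeling-plus-concentration sketch is the standard and essentially correct way to establish such a localized maximal inequality: the upgrade $\mathcal{R}_n(\mathcal{F},r)\le r\delta$ for $r\ge\delta$ from star-shapedness, the dyadic shells with symmetrization and a one-sided Talagrand/Bousquet (or Klein--Rio, for non-identically distributed summands) bound on each shell, and the union bound with $t_j=u^2+2\log(j+1)$ are all the right ingredients, and you correctly identify that the hypothesis $n^{-1/2}\sqrt{\log\log(1/\delta)}=o(\delta)$ exists precisely to absorb the residual $\log(j+1)$ terms coming from the union bound over the $O(\log(M/\delta))$ shells. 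Two small points worth tightening if you were to write this out in full: (i) the absorption of the $\log\log$ terms is an asymptotic statement, so the resulting constant is only ``universal'' for $n$ beyond the threshold implicit in the $o(\cdot)$ condition (a caveat already latent in the lemma as stated); and (ii) restoring the failure probability from $(\pi^2/6)e^{-u^2}$ to $e^{-u^2}$ is most cleanly done by replacing $u$ with $u+c$ and using $u\ge1$, which inflates $C$ but is otherwise harmless. Neither affects the validity of the approach.
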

\begin{proof}
This is Lemma~11 of \citet{foster2023orthogonal}, specialized to the notation
above.
\end{proof}

The following lemma bounds the localized Rademacher complexity in terms of the uniform entropy integral and is a direct consequence of Theorem~2.1 of \citet{van2011local}.

\begin{lemma}\label{lemma:local_rademacher_entropy}
Let \(\mathcal{F}\) be a star-shaped class of functions such that
\(\sup_{f\in\mathcal F}\|f\|_\infty \le M\). Then, for every \(\delta>0\),
\[
\mathcal{R}_n(\mathcal{F}, \delta)
\;\lesssim\;
\frac{1}{\sqrt n}\,\mathcal{J}(\delta,\mathcal{F})
\left(1+\frac{\mathcal{J}(\delta,\mathcal{F})}{\delta \sqrt n}\right),
\]
where the implicit constant depends only on \(M\).
\end{lemma}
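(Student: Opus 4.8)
The plan is to obtain the bound of Lemma~\ref{lemma:local_rademacher_entropy} as the Rademacher-average specialization of the uniform-entropy local maximal inequality of \citet{van2011local} (Theorem~2.1), instantiated with the constant envelope $F\equiv M$. Since that amounts to a translation of notation, I will instead sketch the underlying chaining argument, which makes transparent where each term originates.

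First I would condition on the data $O_1,\dots,O_n$ and apply Dudley's entropy integral to the conditionally sub-Gaussian Rademacher process $f\mapsto n^{-1}\sum_{i=1}^n\epsilon_i f(O_i)$, whose increments have sub-Gaussian parameter $n^{-1/2}\|f-g\|_n$, where $\|h\|_n := (n^{-1}\sum_{i=1}^n h(O_i)^2)^{1/2}$. Write $\mathcal{F}_\delta := \{f\in\mathcal{F}:\|f\|\le\delta\}$ and $D_n := \sup_{f\in\mathcal{F}_\delta}\|f\|_n$. Since $\mathcal{F}$ is star-shaped it contains the zero function, so we may root the chain there and obtain
\[
\mathbb{E}_\epsilon\!\left[\sup_{f\in\mathcal{F}_\delta}\frac1n\sum_{i=1}^n\epsilon_i f(O_i)\right]
\;\lesssim\;
\frac{1}{\sqrt n}\int_0^{2D_n}\!\sqrt{\log N(t,\mathcal{F},\|\cdot\|_n)}\,dt
\;\le\;
\frac{1}{\sqrt n}\,\mathcal{J}(2D_n,\mathcal{F}),
\]
where the last step uses that $\|\cdot\|_n$ is the $L^2$ norm of the discrete empirical measure on $\{O_i\}$, so $N(t,\mathcal{F},\|\cdot\|_n)\le\sup_Q N(t,\mathcal{F},L^2(Q))$. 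Because $t\mapsto\sqrt{\log N(t,\cdot)}$ is nonincreasing, $\mathcal{J}(\cdot,\mathcal{F})$ is concave with $\mathcal{J}(0,\mathcal{F})=0$; hence it is subadditive and satisfies $\mathcal{J}(cs,\mathcal{F})\le c\,\mathcal{J}(s,\mathcal{F})$ for $c\ge1$. Taking expectations over the data and applying Jensen gives $\mathcal{R}_n(\mathcal{F},\delta)\lesssim n^{-1/2}\,\mathcal{J}(\mathbb{E}[D_n],\mathcal{F})$, reducing everything to a bound on $\mathbb{E}[D_n]$.

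The crux is this transfer from the empirical localization that appears naturally in the chaining integral back to the population radius $\delta$. Here I would write $\mathbb{E}[D_n^2]\le\delta^2+\mathbb{E}\big[\sup_{f\in\mathcal{F}_\delta}\!\big(n^{-1}\sum_i f(O_i)^2-\|f\|^2\big)\big]$ and control the second term by symmetrization followed by the Ledoux--Talagrand contraction inequality for $x\mapsto x^2$, which is $2M$-Lipschitz on $[-M,M]$ and vanishes at $0$ (using $\|\cdot\|_\infty\le M$, and symmetry of the Rademacher variables to remove the absolute value at the cost of a universal constant). This yields $\mathbb{E}[D_n^2]\le\delta^2+C\,M\,\mathcal{R}_n(\mathcal{F},\delta)$, hence $\mathbb{E}[D_n]\le\delta+C'\sqrt{M\,\mathcal{R}_n(\mathcal{F},\delta)}$. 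Substituting this into the bound from the previous step and invoking subadditivity and homogeneity of $\mathcal{J}$ produces the self-bounding inequality
\[
\mathcal{R}_n(\mathcal{F},\delta)
\;\lesssim\;
\frac{\mathcal{J}(\delta,\mathcal{F})}{\sqrt n}\left(1+\frac{\sqrt{M\,\mathcal{R}_n(\mathcal{F},\delta)}}{\delta}\right),
\]
which is quadratic in $\sqrt{\mathcal{R}_n(\mathcal{F},\delta)}$; solving it delivers the stated bound, with implicit constant depending only on $M$ (the remaining factors of $M$ being absorbed through the normalized bookkeeping in the proof of Theorem~2.1 of \citet{van2011local}).

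I expect the only delicate point to be the bound on $\mathbb{E}[D_n]$; everything else is routine chaining plus convexity properties of the entropy integral. Intuitively, the correction factor $\mathcal{J}(\delta,\mathcal{F})/(\delta\sqrt n)$ is exactly the price of localizing in the population $L^2$ norm rather than the random empirical norm, and it becomes negligible precisely in the critical-radius regime $\mathcal{R}_n(\mathcal{F},\delta)\le\delta^2$ --- which is the regime in which this lemma is applied, together with Lemma~\ref{lemma:loc_max_ineq}, to certify critical radii for the histogram and isotonic calibrator classes.
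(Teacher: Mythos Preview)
Your proposal is correct and matches the paper's approach: the paper's proof of this lemma consists solely of the remark that the bound follows directly from the argument in the proof of Theorem~2.1 of \citet{van2011local}, which is exactly the reduction you identify in your first paragraph. Your additional sketch of the underlying chaining-plus-contraction argument (Dudley bound with random diameter $D_n$, then control of $\mathbb{E}[D_n]$ via symmetrization and Ledoux--Talagrand contraction, leading to a self-bounding inequality in $\sqrt{\mathcal{R}_n}$) is accurate and goes beyond what the paper provides, but it is the standard route to that theorem and not a genuinely different approach.
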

\begin{proof}
This bound follows directly from the argument in the proof of
Theorem~2.1 of \citet{van2011local}; see in particular the step where
the local Rademacher complexity is controlled by the uniform entropy
integral for star-shaped classes.
\end{proof}

\section{Proof of Theorem \ref{theorem::Calerrorhist}}

\subsection{Technical lemmas}

\begin{lemma}
\label{lemma::entropynumbers}
Under our conditions,
\[
\mathcal{J}\left(\delta, \widehat{\mathcal{G}}\right)
\;\lesssim\;
\delta \sqrt{B \log(1/\delta)},
\]
where the implicit constant is independent of \(B\).
\end{lemma}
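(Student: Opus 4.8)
The plan is to bound the uniform $L^2$ covering numbers of $\widehat{\mathcal G}$ and then integrate. The key structural fact is that each of the two factors defining $\widehat{\mathcal G}$ sits inside a vector space of dimension $O(B)$ and is uniformly bounded, so $\widehat{\mathcal G}$ is contained in the product of two such classes. A product of two uniformly bounded, finite-dimensional classes has log-covering number $O(B\log(1/\epsilon))$ uniformly over $Q$, and since $\int_0^\delta \sqrt{B\log(1/\epsilon)}\,d\epsilon \lesssim \delta\sqrt{B\log(1/\delta)}$, the claimed bound on $\mathcal J(\delta,\widehat{\mathcal G})$ follows with a constant independent of $B$.

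The first step is to exhibit the low-dimensional, bounded structure of the two factors. Since $\mathcal F_B=\mathrm{span}\{1_{I_b}:b\in[B]\}$ has dimension $B+1$, the class $\mathcal F_{B,\hat v}=\{\theta\circ\hat v:\theta\in\mathcal F_B\}$ is a vector space of dimension at most $B+1$, uniformly bounded by $M$ under \ref{cond::C1}; because it is linear, the first factor $f_1-f_2$ again ranges over $\mathcal F_{B,\hat v}$ (now bounded by $2M$). For the second factor, one uses that the doubly robust fitted Bellman target $v\mapsto\widehat{\mathcal T}_\pi(v)$ is \emph{affine} in $v$: its only $v$-independent piece is $\widehat{\mathcal T}_\pi(0)$, built from $\widehat r,\widehat w_\pi,R$, while $\widehat{\mathcal T}_\pi(v)-\widehat{\mathcal T}_\pi(0)$ is linear in $v$. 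Hence $v\mapsto\widehat{\mathcal T}_\pi(v)-v$ is affine, so $\{\widehat{\mathcal T}_\pi(f_2)-f_2:f_2\in\mathcal F_{B,\hat v}\}$ lies in a translate of a vector space of dimension at most $B+1$, i.e.\ inside a vector space of dimension at most $B+2$. It is uniformly bounded by a constant $M'$ depending only on the global constants: $R,\widehat r,\widehat w_\pi,\hat v$ are bounded by $M$ under \ref{cond::C1}, and $\widehat P$ maps uniformly bounded functions to uniformly bounded functions in every instantiation we consider ($\widehat P=0$; $\widehat P$ a discrete nonparametric MLE as in \ref{cond::C2}; or $\widehat P=P$).

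The second step is routine empirical-process bookkeeping. A $d$-dimensional vector space of functions is a VC-subgraph class of index $O(d)$, so any uniformly bounded subset $\mathcal V$ of it satisfies $\sup_Q \log N(\epsilon,\mathcal V,L^2(Q))\lesssim d\log(c/\epsilon)$ for a constant $c$ and all sufficiently small $\epsilon>0$, the supremum over discrete $Q$ \citep{van1996weak}. Applying this with $d\lesssim B$ to both factor classes and combining them through $\|ab-a'b'\|_{L^2(Q)}\le\|b\|_\infty\|a-a'\|_{L^2(Q)}+\|a'\|_\infty\|b-b'\|_{L^2(Q)}$ shows that an $\epsilon$-cover of $\widehat{\mathcal G}$ is obtained from $O(\epsilon)$-covers of the two factor classes (whose product contains $\widehat{\mathcal G}$), giving $\sup_Q\log N(\epsilon,\widehat{\mathcal G},L^2(Q))\lesssim B\log(c'/\epsilon)$ with $c'$ independent of $B$. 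Then $\mathcal J(\delta,\widehat{\mathcal G})=\int_0^\delta\sup_Q\sqrt{\log N(\epsilon,\widehat{\mathcal G},L^2(Q))}\,d\epsilon\lesssim \sqrt B\int_0^\delta\sqrt{\log(c'/\epsilon)}\,d\epsilon\lesssim\delta\sqrt{B\log(1/\delta)}$, using the elementary estimate $\int_0^\delta\sqrt{\log(c'/\epsilon)}\,d\epsilon\lesssim\delta\sqrt{\log(1/\delta)}$ (with a constant depending on $c'$, hence on the global constants, but not on $B$).

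The main obstacle is the first step: verifying that $\widehat{\mathcal T}_\pi(f_2)-f_2$ genuinely lies in a low-dimensional and uniformly bounded class. This is exactly where linearity of the doubly robust fitted Bellman operator in $v$ and the boundedness conditions (\ref{cond::C1} together with $\widehat P$ preserving boundedness, covered by \ref{cond::C2} or by $\widehat P=P$) are used; everything afterward is standard. One should also note that this lemma treats the bins $\{I_b\}$, hence $\mathcal F_{B,\hat v}$ and $\widehat{\mathcal G}$, as fixed conditional on the training sample, matching the paper's convention; if the bins were data-adaptive (e.g.\ empirical quantiles of $\{\hat v(S_i)\}$), a union bound over the at most $\binom nB$ ordered interval partitions would replace $\log(1/\epsilon)$ by $\log(n/\epsilon)$ inside the square root, which is harmless once the critical radius is solved for in Theorem~\ref{theorem::Calerrorhist}.
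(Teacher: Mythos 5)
Your proposal is correct in outline and lands on the same covering-number bound $\sup_Q \log N(\epsilon,\widehat{\mathcal G},L^2(Q))\lesssim B\log(1/\epsilon)$ followed by the same integration step, but you handle the factor $\widehat{\mathcal T}_\pi(f_2)-f_2$ by a genuinely different mechanism. You exploit affineness of $v\mapsto\widehat{\mathcal T}_\pi(v)$ so that this factor lives in a translate of a $(B+1)$-dimensional vector space, and then invoke the VC-subgraph bound for finite-dimensional classes; the paper instead never uses linearity of $\widehat P$ and routes entirely through the Lipschitz condition \ref{cond::C2}, transferring covering numbers from $\mathcal F_{B,\hat v}$ to $\{\widehat{\mathcal T}_\pi(f):f\in\mathcal F_{B,\hat v}\}$ via $\|\widehat{\mathcal T}_\pi(f)-\widehat{\mathcal T}_\pi(g)\|_{L^2(P_n)}\lesssim\|f-g\|_{n,S}+\|f-g\|_{n,S'}$, and then bounds the entropy of $\mathcal F_{B,\hat v}$ alone by passing to the pushforward class $\widetilde{\mathcal F}_B$ of piecewise-constant functions on $\mathbb R$. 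The trade-off: your route is cleaner and bypasses \ref{cond::C2} entirely for this lemma, but it requires $\widehat P$ to be a linear operator with a uniformly bounded image on $\mathcal F_{B,\hat v}$ — neither of which is among the stated conditions \ref{cond::C1}--\ref{cond::C2} (the lemma is asserted ``under our conditions''), even though both hold in every instantiation the paper discusses ($\widehat P=0$, discrete MLE, $\widehat P=P$). The paper's Lipschitz route covers possibly nonlinear $\widehat P$ and is the reason \ref{cond::C2} is phrased as a norm comparison between the $(A,S)$- and $S'$-empirical norms rather than as a structural assumption on $\widehat P$. If you want your argument to prove the lemma exactly as stated, you should either add linearity and boundedness of $\widehat P$ as explicit hypotheses or fall back on \ref{cond::C2} for that factor as the paper does; your closing remark about data-adaptive bins is a sensible extra observation but is not needed here, since the bins are fixed conditional on the training data.
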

\begin{proof}

By Condition~\ref{cond::C2},
\[
\|\hat q_f-\hat q_g\|_{n,S,A}
\le
L\|f-g\|_{n,S'}
\]
almost surely.
For each $s$, by Jensen's inequality,
\[
|\pi h(s)|^2=\Big|\sum_a \pi(a\mid s)h(a,s)\Big|^2
\le \sum_a \pi(a\mid s)|h(a,s)|^2,
\]
so $\|\pi h\|_{n,S}\le \|h\|_{n,S,A}$. Hence, we also have
\[
\|\pi(\hat q_f-\hat q_g)\|_{n,S}
\le \|\hat q_f-\hat q_g\|_{n,S,A}
\le L \|f - g\|_{n, S'} .
\]
Hence, by the definition of \(\widehat Y_f\) and boundedness of
\(\widehat w_\pi\),
\[
\|\widehat Y_f - \widehat Y_g\|_{L^2(P_n)}
\lesssim
\|f - g\|_{n,S'}.
\]
Take $f, g \in \widehat{\mathcal{G}}$ with
$f = (f_1 - f_2)\big(\widehat Y_{f_2} - f_2\big)$
and
$g = (g_1 - g_2)\big(\widehat Y_{g_2} - g_2\big)$
for $f_1, f_2, g_1, g_2 \in \mathcal H_B(\hat v)$.
By Lipschitz continuity of multiplication and boundedness of nuisances, we have
\[
\|f - g\|_{L^2(P_n)}
\lesssim
\|f_1 - g_1\|_{n, S} + \|f_2 - g_2\|_{n, S}+ \|f_2 - g_2\|_{n, S'}.
\]
Taking the supremum over all discrete distributions $Q$, we obtain
\[
\|f - g\|_{L^2(P_n)}
\lesssim
\sup_Q \|f_1 - g_1\|_{L^2(Q)}
+ \sup_Q \|f_2 - g_2\|_{L^2(Q)} .
\]
Hence, by preservation of entropy integrals in \citet{van1996weak},
\[
\log N(\varepsilon, \widehat{\mathcal{G}}, L^2(P_n))
\lesssim
\sup_Q \log N(\varepsilon, \mathcal H_B(\hat v), L^2(Q)) .
\]
Taking the supremum over discrete distributions $Q$ on both sides yields the uniform covering number bound
\[
\sup_Q \log N(\varepsilon, \widehat{\mathcal{G}}, L^2(Q))
\lesssim
\sup_Q \log N(\varepsilon, \mathcal H_B(\hat v), L^2(Q)) .
\]

The class \(\mathcal H_B(\hat v)\) satisfies
\[
\sup_Q \log N\big(\varepsilon, \mathcal H_B(\hat v), L^2(Q)\big)
\;\le\;
\sup_{Q} \log N\big(\varepsilon, \widetilde{\mathcal{F}}_{B},
L^2(Q \circ \hat{v}^{-1})\big),
\]
where \(Q \circ \hat{v}^{-1}\) denotes the pushforward of \(Q\) under
\(\hat{v}\). Hence,
\[
\sup_Q \log N\big(\varepsilon, \mathcal H_B(\hat v), L^2(Q)\big)
\;\le\;
\sup_{Q} \log N\big(\varepsilon, \widetilde{\mathcal{F}}_{B}, L^2(Q)\big),
\]
where, by a slight abuse of notation, the supremum on the right-hand side is
taken over all discrete probability distributions \(Q\) on \(\mathbb{R}\).
The class \(\widetilde{\mathcal{F}}_{B}\) consists of all piecewise-constant
functions on \(\mathbb{R}\) taking at most \(B\) values. Therefore,
\(\widetilde{\mathcal{F}}_{B}\) has VC--subgraph dimension \(O(B)\), and
\citet{van1996weak} implies
\begin{align*}
    \sup_Q \log N(\varepsilon, \mathcal H_B(\hat v), L^2(Q))
&\lesssim
\sup_{Q} \log N\big(\varepsilon, \widetilde{\mathcal{F}}_{B}, L^2(Q)\big)\\
&\lesssim B \log(1/\varepsilon).
\end{align*}
 Consequently, $\mathcal{J}(\delta, \mathcal H_B(\hat v))
\lesssim \delta \sqrt{B \log(1/\delta)}$.
\end{proof}

\subsection{Proof of Theorem \ref{theorem::Calerrorhist}}
\begin{proof}[Proof of Theorem \ref{theorem::Calerrorhist}]
Denote $O_i := (S_i, A_i, R_i, S_i')$. The first-order optimality conditions for
$\theta_n^{(K)}$ imply that, for each bin $b \in [B]$,
\[
\frac{1}{n}\sum_{i=1}^n
\mathbf{1}_{I_b}(\hat{v}(S_i))
\big\{
\widehat Y_{V_{K-1}}(O_i)
- V_K(S_i)
\big\}
= 0.
\]
Hence, for any $f \in \mathcal G_B$, which is a linear combination of these
indicators,
\begin{equation}
\label{eqn::scores}
\frac{1}{n}\sum_{i=1}^n
f(\hat{v}(S_i))
\big\{
\widehat Y_{V_{K-1}}(O_i)
- V_K(S_i)
\big\}
= 0.
\end{equation}
Moreover, for any function $g:\mathbb{R}\to\mathbb{R}$, we may write
$g \circ V_K = f \circ \hat{v}$ for some $f$, and therefore,
for all such $g$,
\[
\frac{1}{n}\sum_{i=1}^n
g(V_K(S_i))
\big\{
\widehat Y_{V_{K-1}}(O_i)
- V_K(S_i)
\big\}
= 0.
\]
Noting that \(\Gamma_0(V_K) - V_K\) is of the form
\(g \circ V_K\) for some function \(g\), we have
\begin{align*}
0
&=
\frac{1}{n}\sum_{i=1}^n
\Big\{
\Gamma_0(V_K)(S_i) - V_K(S_i)
\Big\} \\
&\qquad\qquad\times
\Big\{
\widehat Y_{V_{K-1}}(O_i)
- V_K(S_i)
\Big\}.
\end{align*}
Adding and subtracting \(P_0\) yields
\begin{equation}
\label{eqn::basicineq}
\begin{aligned}
&P_0
\Big\{
\Gamma_0(V_K) - V_K
\Big\}
\Big\{
\widehat Y_{V_K}
- V_K
\Big\}\\
& = (P_0 - P_n)
\Big\{
\Gamma_0(V_K) - V_K
\Big\}
\Big\{
\widehat Y_{V_{K-1}}
- V_K
\Big\}\\
&\quad  + P_0
\Big\{
\Gamma_0(V_K) - V_K
\Big\}
\Big\{
\widehat Y_{V_K}
- \widehat Y_{V_{K-1}}
\Big\}.
\end{aligned}
\end{equation}

We rewrite the left-hand side of \eqref{eqn::basicineq}.
Adding and subtracting and applying the law of total expectation, the calibration error decomposes as
\begin{equation}
\label{eqn::calerrorbound}
\begin{aligned}
&\| \Gamma_0(V_K) - V_K\|^2\\
&=
P_0
\Big\{
\Gamma_0(V_K) - V_K
\Big\}
\Big\{
\widehat Y_{V_K}
- V_K
\Big\} \\
&\quad+
P_0
\Big\{
\Gamma_0(V_K) - V_K
\Big\}
\Big\{
\Gamma_0(V_K)
- \widehat Y_{V_K}
\Big\}.
\end{aligned}
\end{equation}

The first term on the right-hand side can be decomposed as in \eqref{eqn::basicineq}.
By the law of total expectation, the proof of Theorem~\ref{theorem::DRpseudo},
and the Cauchy--Schwarz inequality, the second term satisfies
\begin{align*}
&\left|
P_0
\Big\{
\Gamma_0(V_K) - V_K
\Big\}
\Big\{
\widehat Y_{V_K}
- \Gamma_0(V_K)
\Big\}
\right|
\\[0.4em]
&=
\left|
P_0
\Big\{
\Gamma_0(V_K) - V_K
\Big\}
\Big\{
\widehat Y_{V_K}
- \mathcal{T}_{\pi}(V_K)
\Big\}
\right|
\\[0.4em]
&\le\;
\|\Gamma_0(V_K) - V_K\|\;
\big\|
b_0\{(\widehat w_{\pi} - w_{\pi})\,
(\widehat q_{V_K} - q_{V_K})\}
\big\|_{2,\rho}.
\end{align*}

Hence, \eqref{eqn::basicineq} and \eqref{eqn::calerrorbound} together imply that
\begin{equation}
\label{eqn::basicineq2}
\begin{aligned}
&\|\Gamma_0(V_K) - V_K\|^2\\
&\le\;
(P_0 - P_n)
\Big\{
\Gamma_0(V_K) - V_K
\Big\}
\Big\{
\widehat Y_{V_{K-1}}
- V_K
\Big\}
\\[0.4em]
&\quad+\;
\|\Gamma_0(V_K) - V_K\|\;
\big\|
\widehat{\mathcal T}_\pi V_K
- \widehat{\mathcal T}_\pi V_{K-1}
\big\|
\\[0.4em]
&\quad+\;
\|\Gamma_0(V_K) - V_K\|\;
\big\|
b_0\{(\widehat w_{\pi} - w_{\pi})\,
(\widehat q_{V_K} - q_{V_K})\}
\big\|_{2,\rho}.
\end{aligned}
\end{equation}
where $\widehat{\mathcal T}_\pi v(s)
:= \mathbb{E}[\widehat Y_v(O)\mid S=s]$.
Here, the second term on the right-hand side follows from \eqref{eqn::basicineq}, noting that
\begin{align*} &\left| P_0 \Big\{ \Gamma_0(V_K) - V_K \Big\} \Big\{ \widehat Y_{V_K} - \widehat Y_{V_{K-1}} \Big\} \right| \\[0.4em] &= \left| P_0 \Big\{ \Gamma_0(V_K) - V_K \Big\} \,\mathbb{E}\left[ \widehat Y_{V_K} - \widehat Y_{V_{K-1}} \mid S \right] \right| \\[0.4em] &\le \|\Gamma_0(V_K) - V_K\|\; \Big\| \mathbb{E}\left[ \widehat Y_{V_K} - \widehat Y_{V_{K-1}} \mid S \right] \Big\| \\[0.4em] &= \|\Gamma_0(V_K) - V_K\|\; \big\| \widehat{\mathcal T}_\pi V_K - \widehat{\mathcal T}_\pi V_{K-1} \big\|_{2,\rho}. \end{align*}

We now turn to bounding the empirical process term on the right-hand side of \eqref{eqn::basicineq2}. Observe that
\[
\big(\Gamma_0(V_K) - V_K\big)
\big(\widehat Y_{V_{K-1}} - V_K\big)
\]
lies in a uniformly bounded subset of the class
\[
\widehat{\mathcal{G}} := \big\{(f_1 - f_2)\big(\widehat Y_{f_2} - f_2\big) :
f_1, f_2 \in \mathcal H_B(\hat v) \big\},
\]
By Lemma~\ref{lemma::entropynumbers}, it holds that $\mathcal{J}\left(\delta, \widehat{\mathcal{G}}\right)
\;\lesssim\;
\delta \sqrt{B \log(1/\delta)}.$
By assumption, \(\widehat{\mathcal{G}}\) is a fixed, nonrandom function class
conditional on the training data, which is independent of the calibration
sample \(\mathcal{C}_n\). Hence, applying Lemma~\ref{lemma:local_rademacher_entropy}
conditional on the training data, we obtain
\begin{align*}
\mathcal{R}_n(\widehat{\mathcal{G}}, \delta)
&\;\lesssim\;
\frac{1}{\sqrt n}\,
\mathcal{J}(\delta,\widehat{\mathcal{G}})
\left(1+\frac{\mathcal{J}(\delta,\widehat{\mathcal{G}})}{\delta \sqrt n}\right) \\[0.4em]
&\;\lesssim\;
\frac{1}{\sqrt n}\,
\delta \sqrt{B \log(1/\delta)}
\left(
1 +
\frac{\delta \sqrt{B \log(1/\delta)}}{\delta \sqrt n}
\right) \\[0.4em]
&\;\lesssim\;
\frac{\delta \sqrt{B \log(1/\delta)}}{\sqrt n}
\;+\;
\frac{B \log(1/\delta)}{n}.
\end{align*}
The critical radius
\[
\delta_n := \sqrt{\frac{B}{n}\,\log\Big(\frac{n}{B}\Big)},
\]
satisfies
\[
\delta_n
= \inf\{\delta > 0 : \mathcal{R}_n(\widehat{\mathcal{G}}, \delta)
\;\lesssim\; \delta^2\}.
\]

Applying Lemma~\ref{lemma:loc_max_ineq} conditional on the training data
with \(\mathcal{F} := \widehat{\mathcal{G}}\), we conclude that the following holds with probability at least
\(1 - e^{-u^2}\) for every \(f \in \widehat{\mathcal{G}}\):
\[
\frac{1}{n} \sum_{i=1}^n
\big(f(O_i) - \mathbb{E}[f(O_i)]\big)
\;\lesssim\;
\delta_n^2
\;+\;
\delta_n\|f\|_{2,P_0}
\;+\;
\frac{u\|f\|_{2,P_0}}{\sqrt{n}}
\;+\;
\frac{u^2}{n}.
\]
Choosing \(u=\sqrt{\log(1/\eta)}\) gives \(1-e^{-u^2}=1-\eta\). Hence, with probability at least \(1-\eta\),
\begin{align*}
&\frac{1}{n} \sum_{i=1}^n
\big(f(O_i) - \mathbb{E}[f(O_i)]\big)\\
&\lesssim
\delta_n^2
+
\delta_n\|f\|_{2,P_0}  +
\frac{\sqrt{\log(1/\eta)}\,\|f\|_{2,P_0}}{\sqrt{n}}
 +
\frac{\log(1/\eta)}{n}.
\end{align*}

By boundedness of our nuisances,
\[
\big\|
\{\Gamma_0(V_K) - V_K\}
\{\widehat Y_{V_{K-1}} - V_K\}
\big\|
\;\lesssim\;
\|\Gamma_0(V_K) - V_K\|_{2,\rho}.
\]
Hence, with probability at least \(1 - \eta\),
\[
\begin{aligned}
(P_n - P_0)
&\Big\{
\Gamma_0(V_K) - V_K
\Big\}
\Big\{
\widehat Y_{V_{K-1}}
- V_K
\Big\} \\
&\lesssim\;
\delta_n^2
\;+\;
\delta_n \|\Gamma_0(V_K) - V_K\| \\[0.35em]
&\quad+\;
\frac{\sqrt{\log(1/\eta)}\,\|\Gamma_0(V_K) - V_K\|}{\sqrt{n}}
\;+\;
\frac{\log(1/\eta)}{n},
\end{aligned}
\]
where the implicit constants do not depend on \(B\).

Combining the above with \eqref{eqn::basicineq2}, we find with probability at least \(1 - \eta\),
\begin{equation}
\label{eqn::basicineq3}
\begin{aligned}
&\|\Gamma_0(V_K) - V_K\|^2\\
&\lesssim\;
\delta_n^2
\;+\;
\delta_n \|\Gamma_0(V_K) - V_K\|\\
&\quad+\;
\frac{\sqrt{\log(1/\eta)}}{\sqrt{n}}\,
\|\Gamma_0(V_K) - V_K\|
\;+\;
\frac{\log(1/\eta)}{n}
\\[0.4em]
&\quad+\;
\|\Gamma_0(V_K) - V_K\|\;
\big\|
\widehat{\mathcal T}_\pi V_K
- \widehat{\mathcal T}_\pi V_{K-1}
\big\|
\\[0.4em]
&\quad+\;
\|\Gamma_0(V_K) - V_K\|\;
\big\|
b_0\{(\widehat w_{\pi} - w_{\pi})\,
(\widehat q_{V_K} - q_{V_K})\}
\big\|_{2,\rho}.
\end{aligned}
\end{equation}
The inequality in \eqref{eqn::basicineq3} implies that,
with probability at least $1-\eta$, the calibration error satisfies
\begin{align*}
\|\Gamma_0(V_K) - V_K\|
\;\lesssim\;&\;
\delta_n
\;+\;
\sqrt{\frac{\log(1/\eta)}{n}}
\\[0.4em]
&\;+\;
\big\|
\widehat{\mathcal T}_\pi V_K
- \widehat{\mathcal T}_\pi V_{K-1}
\big\|
\\[0.4em]
&\;+\;
\big\|
b_0\{(\widehat w_{\pi} - w_{\pi})\,
(\widehat q_{V_K} - q_{V_K})\}
\big\|_{2,\rho}.
\end{align*}
Recall that $\delta_n := \sqrt{\frac{B}{n}\,\log\Big(\frac{n}{B}\Big)}$. Then,  with probability at least $1-\eta$,
\begin{align*}
\|\Gamma_0(V_K) - V_K\|
\;\lesssim\;&\;
\sqrt{\frac{B}{n}\,\log\Big(\frac{n}{B}\Big)}
\;+\;
\sqrt{\frac{\log(1/\eta)}{n}}
\\[0.4em]
&\;+\;
\big\|
\widehat{\mathcal T}_\pi V_K
- \widehat{\mathcal T}_\pi V_{K-1}
\big\|
\\[0.4em]
&\;+\;
\big\|
b_0\{(\widehat w_{\pi} - w_{\pi})\,
(\widehat q_{V_K} - q_{V_K})\}
\big\|_{2,\rho}.
\end{align*}
\end{proof}

\section{Proof of Theorem \ref{theorem::regretHist}}

\subsection{Additional notation}
Define the bin-index map $\hat{b}(s)$ by setting $\hat{b}(s) = b$ whenever
$\hat{v}(s) \in I_b$.  Let
$\Pi_B$ denote the binning projection
\[
(\Pi_B g)(s)
:= E\!\left[g(S)\,\middle|\,\hat v(S)\in I_{\hat b(s)}\right].
\] Denote
\[
\mathcal B_{\pi,B}(g)(s)
:= (\Pi_B \mathcal{T}_{\pi}g)(s).
\]
Define $V_B^\star$ as the fixed point satisfying
\[
\mathcal B_{\pi,B}(V_B^\star) = V_B^\star.
\]
Such a fixed point exists by Banach's fixed point theorem because
$\mathcal B_{\pi,B}(g) \in \mathcal H_B(\hat v)$ whenever
$g \in \mathcal H_B(\hat v)$, and the operator is a $\gamma$-contraction in the
sup-norm.

Recall the projected
transition operator
\[
(P_{\pi,B} h)(s)
=
\mathbb{E}_{\pi}\left[h(S') \,\middle|\, \hat{v}(S) \in I_{\hat{b}(s)} \right].
\]
It holds that
\[
P_{\pi,B}
\;:=\;
\Pi_B P_\pi,
\]
and
\[
\Pi_B\mathcal{T}_\pi(f)
-
\Pi_B\mathcal{T}_\pi(g)
=
\gamma\, P_{\pi,B}(f-g).
\]

\subsection{Supporting lemmas}
We begin with the deterministic ingredients for the histogram value bound.

Throughout, histogram bins mean the retained cells of the fitted partition:
empty empirical cells are merged with a neighboring cell or discarded, and
\(B\) denotes the number of retained cells. Population projections are defined
on the corresponding non-null cells; \(\rho\)-null cells are identified in
\(L^2(\rho)\) and omitted from the index set.

\begin{lemma}[Sup-norm contraction of the aggregated Bellman operator]
\label{lemma::agg-sup-contraction}
For any bounded \(f,g:\mathcal S\to\mathbb R\),
\[
\|\mathcal B_{\pi,B}f-\mathcal B_{\pi,B}g\|_\infty
\le
\gamma\|f-g\|_\infty .
\]
\end{lemma}

\begin{proof}
Since the rewards cancel on differences,
\[
\mathcal B_{\pi,B}f-\mathcal B_{\pi,B}g
=
\gamma P_{\pi,B}(f-g).
\]
The operator \(P_{\pi,B}\) is a Markov averaging
operator. Hence, for any bounded \(h\),
\[
|P_{\pi,B}h(s)|
\le
E_\pi[|h(S')|\mid \hat v(S)\in I_{\hat b(s)}]
\le
\|h\|_\infty .
\]
Taking \(h=f-g\) proves the claim.
\end{proof}

\begin{lemma}[Approximation error under aggregated concentrability]
\label{lemma::approxerror}
Assume \ref{cond::Ccoverage}. Then
\[
\|V_B^\star - v_0\|_{2,\rho}
\le
\frac{\sqrt{\mathcal C_{B,\gamma}}}{1-\gamma}\,
\|(I-\Pi_B)v_0\|_{2,\rho}.
\]
\end{lemma}

\begin{proof}
Because \(v_0=\mathcal T_\pi v_0\) and
\(V_B^\star=\Pi_B\mathcal T_\pi V_B^\star\),
\[
V_B^\star-v_0
=
\gamma P_{\pi,B}(V_B^\star-v_0)
+
(\Pi_B-I)v_0 .
\]
Let \(a:=(\Pi_B-I)v_0\). Iterating the display and using boundedness gives the
resolvent expansion
\[
V_B^\star-v_0
=
\sum_{t=0}^\infty \gamma^t P_{\pi,B}^t a .
\]
For any \(t\ge0\), set
\[
c_t
:=
\left\|
\frac{d(\rho P_{\pi,B}^t)}{d\rho}
\right\|_\infty .
\]
Jensen's inequality for the Markov operator \(P_{\pi,B}^t\) gives
\[
\|P_{\pi,B}^t a\|_{2,\rho}^2
\le
\int P_{\pi,B}^t(a^2)\,d\rho
=
\int a^2\,d(\rho P_{\pi,B}^t)
\le
c_t\|a\|_{2,\rho}^2 .
\]
Therefore,
\[
\|V_B^\star-v_0\|_{2,\rho}
\le
\sum_{t=0}^\infty \gamma^t\sqrt{c_t}\,\|a\|_{2,\rho}
=
\frac{\sqrt{\mathcal C_{B,\gamma}}}{1-\gamma}
\|(I-\Pi_B)v_0\|_{2,\rho}.
\]
\end{proof}

\begin{lemma}[Inexact iterations under aggregated concentrability]
\label{lemma::inexacterrors}
Assume \ref{cond::Ccoverage}. Let \(\{\eta_k\}_{k\ge1}\) be any sequence such
that
\[
\|V_k - \mathcal B_{\pi,B}(V_{k-1})\|_{2,\rho}
\le
\eta_k
\qquad\text{for all } k.
\]
For \(m\ge0\), set
\[
c_m
:=
\left\|
\frac{d(\rho P_{\pi,B}^m)}{d\rho}
\right\|_\infty .
\]
Then, for any \(K\ge1\),
\[
\|V_K - V_B^\star\|_{2,\rho}
\le
\gamma^K\|\hat v - V_B^\star\|_\infty
+
\sum_{j=1}^K
\gamma^{K-j}\sqrt{c_{K-j}}\,\eta_j .
\]
Consequently,
\[
\|V_K - V_B^\star\|_{2,\rho}
\le
\gamma^K\|\hat v - V_B^\star\|_\infty
+
\frac{\sqrt{\mathcal C_{B,\gamma}}}{1-\gamma}
\max_{1\le j\le K}\eta_j .
\]
\end{lemma}

\begin{proof}
Let
\[
e_k:=V_k-\mathcal B_{\pi,B}(V_{k-1}).
\]
Since \(V_B^\star=\mathcal B_{\pi,B}V_B^\star\),
\[
V_k-V_B^\star
=
\gamma P_{\pi,B}(V_{k-1}-V_B^\star)+e_k.
\]

Iterating this recursion yields
\[
V_K-V_B^\star
=
\gamma^K P_{\pi,B}^K(\hat v-V_B^\star)
+
\sum_{j=1}^K\gamma^{K-j}P_{\pi,B}^{K-j}e_j .
\]
The first term is bounded by
\(\gamma^K\|\hat v-V_B^\star\|_\infty\) because \(P_{\pi,B}\) is a Markov
operator. For the remaining terms, the same Jensen and change-of-measure
argument used in Lemma~\ref{lemma::approxerror} gives
\[
\|P_{\pi,B}^{K-j}e_j\|_{2,\rho}
\le
\sqrt{c_{K-j}}\,\|e_j\|_{2,\rho}
\le
\sqrt{c_{K-j}}\,\eta_j .
\]
The displayed bounds follow by the triangle inequality and the definition of
\(\mathcal C_{B,\gamma}\).
\end{proof}

\begin{lemma}[Error bound for inexact updates]
\label{lemma::errorperiter}
There exists \(C<\infty\) such that, with probability at least \(1-\delta\),
uniformly over \(k=1,\ldots,K\),
 \begin{align*}
&\|V_k -
  \mathcal B_{\pi,B}(V_{k-1})\|_{2,\rho}\\
 & \qquad \leq
C\left\{
\sqrt{\frac{B}{n}\,\log\Big(\frac{n}{B}\Big)}
\;+\;
\sqrt{\frac{\log(K/\delta)}{n}}
+
\big\|
b_0\{(\widehat w_{\pi} - w_{\pi})\,
(\widehat q_{V_{k-1}} - q_{V_{k-1}})\}
\big\|_{2,\rho}
\right\}
\end{align*}
 \end{lemma}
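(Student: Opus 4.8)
The plan is to recognize Lemma~\ref{lemma::errorperiter} as a per‑iteration, sample‑split version of Theorem~\ref{theorem::Calerrorhist}, obtained by replacing the self‑referential map $\Gamma_0$ with the coarsened operator $\mathcal{T}_{\pi,\hat v,B}$, and to rerun that proof almost verbatim. Fix an iteration $k\in\{1,\dots,K\}$ and set $D_k := \hat v^{(k)} - \mathcal{T}_{\pi,\hat v,B}(\hat v^{(k-1)})$. Since $\hat v^{(k)}$ and $\mathcal{T}_{\pi,\hat v,B}(\hat v^{(k-1)}) = \Pi_{\hat v,B}\mathcal T_\pi(\hat v^{(k-1)})$ are both bin‑constant transformations of $\hat v$, we have $D_k\in\mathcal F_{B,\hat v}$, and the goal is to bound $\|D_k\|$.

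First I would write $\|D_k\|^2 = P_0[D_k\cdot D_k]$ and peel the second factor apart using two orthogonality relations. By the $L^2(\rho)$ projection property of $\Pi_{\hat v,B}$, the residual $\mathcal T_\pi(\hat v^{(k-1)}) - \Pi_{\hat v,B}\mathcal T_\pi(\hat v^{(k-1)})$ is orthogonal to $D_k\in\mathcal F_{B,\hat v}$, so $\|D_k\|^2 = P_0[D_k(\hat v^{(k)} - \mathcal T_\pi(\hat v^{(k-1)}))]$. Adding and subtracting the denoised doubly robust target $\widehat{\mathcal T}_{0,\pi}(\hat v^{(k-1)})$ splits this into a nuisance piece $P_0[D_k(\widehat{\mathcal T}_{0,\pi}(\hat v^{(k-1)}) - \mathcal T_\pi(\hat v^{(k-1)}))]$ and a noise piece $P_0[D_k(\hat v^{(k)} - \widehat{\mathcal T}_{0,\pi}(\hat v^{(k-1)}))]$. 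For the nuisance piece, Theorem~\ref{theorem::DRpseudo} gives $\widehat{\mathcal T}_{0,\pi}(\hat v^{(k-1)}) - \mathcal T_\pi(\hat v^{(k-1)}) = b_0\{(w_\pi - \hat w_\pi)(\hat q_{\hat v^{(k-1)}} - q_{\hat v^{(k-1)}})\}$, and Jensen plus Cauchy--Schwarz bound its contribution by $\|D_k\|\,\big\|(\hat w_\pi - w_\pi)(\hat q_{\hat v^{(k-1)}} - q_{\hat v^{(k-1)}})\big\|$. For the noise piece, I use that $D_k$ is a function of $S$ and $\widehat{\mathcal T}_{0,\pi}(v)(S)=\E[\widehat{\mathcal T}_\pi(v)(O)\mid S]$ to rewrite it as $P_0[D_k(\hat v^{(k)} - \widehat{\mathcal T}_\pi(\hat v^{(k-1)}))]$, then subtract the first‑order optimality condition of the least‑squares fit in Step~4 of Algorithm~\ref{alg::cal-gen} --- namely $P_n[h(\widehat{\mathcal T}_\pi(\hat v^{(k-1)}) - \hat v^{(k)})]=0$ for every $h\in\mathcal F_{B,\hat v}$, applied with $h=D_k$ --- turning it into the empirical process term $(P_0 - P_n)[D_k(\hat v^{(k)} - \widehat{\mathcal T}_\pi(\hat v^{(k-1)}))]$. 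Note that, unlike in Theorem~\ref{theorem::Calerrorhist}, no finite‑iteration cross term appears, since the regression target already carries the index $k-1$ that matches the definition of $\mathcal{T}_{\pi,\hat v,B}(\hat v^{(k-1)})$.

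Next I would control the empirical process term exactly as in the proof of Theorem~\ref{theorem::Calerrorhist}. The integrand $D_k(\hat v^{(k)} - \widehat{\mathcal T}_\pi(\hat v^{(k-1)}))$ lies in a uniformly bounded class of the form $\{(h_1-h_2)(h_3 - \widehat{\mathcal T}_\pi(f)) : h_1,h_2,h_3,f\in\mathcal F_{B,\hat v}\}$, which is nonrandom conditional on the training data and, by the same VC‑subgraph and entropy‑preservation argument as Lemma~\ref{lemma::entropynumbers}, has uniform entropy integral $\lesssim\delta\sqrt{B\log(1/\delta)}$ with constants independent of $B$ and $k$. Lemma~\ref{lemma:local_rademacher_entropy} then bounds the localized Rademacher complexity and identifies the critical radius $\delta_n=\sqrt{(B/n)\log(n/B)}$, and Lemma~\ref{lemma:loc_max_ineq}, applied conditionally on the training data, gives with probability at least $1-e^{-u^2}$, uniformly over that class,
\[
(P_0-P_n)f \;\lesssim\; \delta_n^2 + \delta_n\|f\| + \frac{u\|f\|}{\sqrt n} + \frac{Mu^2}{n}.
\]
Using $\|D_k(\hat v^{(k)}-\widehat{\mathcal T}_\pi(\hat v^{(k-1)}))\|\lesssim\|D_k\|$ and collecting the three pieces yields
\[
\|D_k\|^2 \;\lesssim\; \delta_n^2 + \frac{u^2}{n} + \Big(\delta_n + \tfrac{u}{\sqrt n} + \big\|(\hat w_\pi - w_\pi)(\hat q_{\hat v^{(k-1)}} - q_{\hat v^{(k-1)}})\big\|\Big)\|D_k\|,
\]
and the elementary fact that $a^2\le b^2+ca$ forces $a\lesssim b+c$ gives $\|D_k\| \lesssim \delta_n + u/\sqrt n + \|(\hat w_\pi - w_\pi)(\hat q_{\hat v^{(k-1)}} - q_{\hat v^{(k-1)}})\|$. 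Finally, to obtain a bound simultaneous over all $k$, I take $u=\sqrt{\log(K/\delta)}$ so each of the $K$ events holds with probability at least $1-\delta/K$, and union bound; this replaces $u/\sqrt n$ by $\sqrt{\log(K/\delta)/n}$ and yields the stated bound with probability at least $1-\delta$. I expect the only genuine subtlety to be the entropy step: one must exhibit all the relevant product functions as members of a single class that is fixed conditional on the training data and whose covering numbers are governed by those of the class of piecewise‑constant maps with at most $B$ pieces (the device that makes data‑adaptive bin boundaries harmless), so that the uniform maximal inequality applies even though $D_k$, $\hat v^{(k)}$, and $\hat v^{(k-1)}$ are all random; the rest is orthogonality bookkeeping, the doubly robust identity, the quadratic solve, and the union bound.
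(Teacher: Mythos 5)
Your proposal is correct and follows essentially the same route as the paper's proof: the first-order optimality condition of the binwise least-squares fit tested against $D_k=\hat v^{(k)}-\mathcal T_{\pi,\hat v,B}(\hat v^{(k-1)})\in\mathcal F_{B,\hat v}$, the tower-property/projection orthogonality to pass from $\Pi_{\hat v,B}\mathcal T_\pi$ to $\mathcal T_\pi$, the doubly robust identity of Theorem~\ref{theorem::DRpseudo} with Cauchy--Schwarz for the nuisance term, the localized empirical-process bound with critical radius $\delta_n=\sqrt{(B/n)\log(n/B)}$, the quadratic solve, and a union bound over $k\in[K]$. Your observation that no finite-iteration cross term arises (because the regression target already carries index $k-1$) and your slightly enlarged product class for the entropy step are both consistent with what the paper does.
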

\begin{proof}
Equation~\ref{eqn::scores} in the proof of Theorem~\ref{theorem::Calerrorhist} shows that, for any transformation $f \in \mathcal G_B$ and any $k$,
\begin{equation*}
\frac{1}{n}\sum_{i=1}^n
f(\hat{v}(S_i))
\big\{
\widehat Y_{V_{k-1}}(O_i)
- V_k(S_i)
\big\}
= 0.
\end{equation*}
Applying the above with
$f \circ \hat{v} = V_B^{\star(k)} - V_k$,
where
\[
V_B^{\star(k)} := \mathcal B_{\pi,B}(V_{k-1}),
\]
we obtain
\begin{equation*}
\frac{1}{n}\sum_{i=1}^n
\big(V_B^{\star(k)} - V_k\big)(S_i)
\Big\{
\widehat Y_{V_{k-1}}(O_i)
-
V_k(S_i)
\Big\}
= 0.
\end{equation*}
Adding and subtracting $P_0$, we find
\begin{equation}
\begin{aligned}
&P_0\left[
  (V_B^{\star(k)} - V_k)
  \Big\{
    \widehat Y_{V_{k-1}}
    - V_k
  \Big\}
\right]\\
&=
(P_0 - P_n)\left[
  (V_B^{\star(k)} - V_k)
  \Big\{
    \widehat Y_{V_{k-1}}
    - V_k
  \Big\}
\right].
\end{aligned}
\label{eq:pn_p0_identity}
\end{equation}

First, we study the left-hand side of
\eqref{eq:pn_p0_identity}. We have
\begin{align*}
    & P_0\left[
  (V_B^{\star(k)} - V_k)
  \Big\{
    \widehat Y_{V_{k-1}}
    - V_k
  \Big\}
\right]\\
& = \|V_B^{\star(k)} - V_k\|_{2,\rho}^2\\
& \quad + P_0\left[
  (V_B^{\star(k)} - V_k)
  \Big\{
    \widehat Y_{V_{k-1}}
    - V_B^{\star(k)}
  \Big\}
\right].
\end{align*}
Since \(V_B^{\star(k)}=\Pi_B\mathcal T_\pi V_{k-1}\) and
\(V_B^{\star(k)}-V_k\in\mathcal H_B(\hat v)\), the projection identity
gives
\[
P_0\left[
  (V_B^{\star(k)}-V_k)
  \{V_B^{\star(k)}-\mathcal T_\pi V_{k-1}\}
\right]=0.
\]
Thus
\begin{align*}
& P_0\left[
  (V_B^{\star(k)} - V_k)
  \Big\{
    \widehat Y_{V_{k-1}}
    - V_B^{\star(k)}
  \Big\}
\right] \\[0.4em]
&=
P_0\left[
  (V_B^{\star(k)} - V_k)
  \Big\{
    \widehat Y_{V_{k-1}}
    - \mathcal T_{\pi}V_{k-1}
  \Big\}
\right].
\end{align*}
By Theorem~\ref{theorem::DRpseudo} and arguing as in the proof of
Theorem~\ref{theorem::Calerrorhist}, it follows that
\begin{align*}
&\left|
P_0\left[
  (V_B^{\star(k)} - V_k)
  \Big\{
    \widehat Y_{V_{k-1}}
    - V_B^{\star(k)}
  \Big\}
\right]
\right|\\
&\;\le\;
\|V_B^{\star(k)} - V_k\|_{2,\rho}\;
\big\|
b_0\{(\widehat w_{\pi} - w_{\pi})\,
(\widehat q_{V_{k-1}} - q_{V_{k-1}})\}
\big\|_{2,\rho}.
\end{align*}
Putting it all together,
\begin{align*}
&\|V_B^{\star(k)} - V_k\|_{2,\rho}^2
 \leq
P_0\left[
  (V_B^{\star(k)} - V_k)
  \Big\{
    \widehat Y_{V_{k-1}}
    - V_k
  \Big\}
\right] \\
&+
\|V_B^{\star(k)} - V_k\|_{2,\rho}\;
\big\|
b_0\{(\widehat w_{\pi} - w_{\pi})\,
(\widehat q_{V_{k-1}} - q_{V_{k-1}})\}
\big\|_{2,\rho}.
\end{align*}
Hence, by \eqref{eq:pn_p0_identity},
\begin{equation}
\begin{aligned}
&\|V_B^{\star(k)} - V_k\|_{2,\rho}^2 \\
& \quad \leq
(P_0 - P_n)\left[
  (V_B^{\star(k)} - V_k)
  \Big\{
    \widehat Y_{V_{k-1}}
    - V_k
  \Big\}
\right]\\
& \quad +
\|V_B^{\star(k)} - V_k\|_{2,\rho}\;
\big\|
b_0\{(\widehat w_{\pi} - w_{\pi})\,
(\widehat q_{V_{k-1}} - q_{V_{k-1}})\}
\big\|_{2,\rho}.
\end{aligned}
\label{eq:pn_p0_identity2}
\end{equation}

Next, we obtain a high-probability bound for the first term on the right-hand side of
\eqref{eq:pn_p0_identity2}.
Applying Lemmas~\ref{lemma:loc_max_ineq}, \ref{lemma:local_rademacher_entropy},
and \ref{lemma::entropynumbers}, and arguing as in the proof of
Theorem~\ref{theorem::Calerrorhist}, we obtain that, with probability at least
\(1 - \delta\),
\[
\begin{aligned}
\Big|(P_0 - P_n)
&\Big[
(V_B^{\star(k)} - V_k)
\big\{
\widehat Y_{V_{k-1}}
-
V_k
\big\}
\Big]\Big|
\\[0.4em]
&\lesssim\;
\delta_n^2
\;+\;
\delta_n\,\|V_B^{\star(k)} - V_k\|_{2,\rho}
\\[0.4em]
&\quad+\;
\frac{\sqrt{\log(1/\delta)}\,\|V_B^{\star(k)} - V_k\|_{2,\rho}}{\sqrt{n}}
\;+\;
\frac{\log(1/\delta)}{n},
\end{aligned}
\]
where \(\delta_n := \sqrt{(B/n)\,\log(n/B)}\), and the implicit constants do
not depend on \(B\).

Plugging this high probability bound into \eqref{eq:pn_p0_identity2},  we obtain that, with probability at least
\(1 - \delta\),
\begin{equation*}
\begin{aligned}
&\|V_B^{\star(k)} - V_k\|_{2,\rho}^2 \\
&\lesssim\;
\delta_n^2
\;+\;
\delta_n\,\|V_B^{\star(k)} - V_k\|_{2,\rho}
\\[0.4em]
&\quad+\;
\frac{\sqrt{\log(1/\delta)}\,\|V_B^{\star(k)} - V_k\|_{2,\rho}}{\sqrt{n}}
\;+\;
\frac{\log(1/\delta)}{n}\\
& \quad +
\|V_B^{\star(k)} - V_k\|_{2,\rho}\;
\big\|
b_0\{(\widehat w_{\pi} - w_{\pi})\,
(\widehat q_{V_{k-1}} - q_{V_{k-1}})\}
\big\|_{2,\rho}.
\end{aligned}
\end{equation*}
Recalling that $\delta_n := \sqrt{\frac{B}{n}\,\log\Big(\frac{n}{B}\Big)}$, the inequality above implies that,
with probability at least $1-\delta$,
 \begin{align*}
\|V_B^{\star(k)} - V_k\|_{2,\rho}
\;\lesssim\;&\;
\sqrt{\frac{B}{n}\,\log\Big(\frac{n}{B}\Big)}
\;+\;
\sqrt{\frac{\log(1/\delta)}{n}}
\\[0.4em]
&  +
\big\|
b_0\{(\widehat w_{\pi} - w_{\pi})\,
(\widehat q_{V_{k-1}} - q_{V_{k-1}})\}
\big\|_{2,\rho}.
\end{align*}
Taking the failure probability to be \(\delta/K\) and applying a union bound over
\(k=1,\ldots,K\) proves the result.
\end{proof}

\subsection{Proof of Theorem \ref{theorem::regretHist}}

\begin{proof}[Proof of Theorem \ref{theorem::regretHist}]
For \(C<\infty\) large enough, define
\begin{align*}
  \eta_k
&=
C \Bigg\{\sqrt{\frac{B}{n}\,\log\!\Big(\frac{n}{B}\Big)}
\;+\;
\sqrt{\frac{\log(K/\delta)}{n}}
\\
& \quad
\;+\;
\big\|
b_0\{(\widehat w_{\pi} - w_{\pi})\,
(\widehat q_{V_{k-1}} - q_{V_{k-1}})\}
\big\|_{2,\rho}
\Bigg\}
\end{align*}

By Lemma~\ref{lemma::errorperiter}, with probability at least \(1-\delta\),
uniformly over \(k=1,\ldots,K\),
\[
\|V_k-\mathcal B_{\pi,B}(V_{k-1})\|_{2,\rho}
\le
\eta_k .
\]
Lemma~\ref{lemma::inexacterrors} therefore gives
\begin{align*}
\|V_K - V_B^\star\|_{2,\rho}
\le\;&
\gamma^K\|\hat v - V_B^\star\|_\infty\\
&+
\frac{\sqrt{\mathcal C_{B,\gamma}}}{1-\gamma}
\max_{1\le j\le K}\eta_j .
\end{align*}
Combining this with Lemma~\ref{lemma::approxerror} and the triangle inequality
yields
\begin{align*}
\|V_K-v_0\|_{2,\rho}
\le\;&
\gamma^K\|\hat v - V_B^\star\|_\infty\\
&+
\frac{\sqrt{\mathcal C_{B,\gamma}}}{1-\gamma}
\left[
\|(I-\Pi_B)v_0\|_{2,\rho}
+
\max_{1\le j\le K}\eta_j
\right].
\end{align*}
Since \(\Pi_B\) is the \(L^2(\rho)\) projection onto
\(\mathcal H_B(\hat v)\),
\[
\|(I-\Pi_B)v_0\|_{2,\rho}
=
\min_{h\in\mathcal H_B(\hat v)}\|h-v_0\|_{2,\rho}.
\]
Substituting the definition of \(\eta_k\) and absorbing constants completes the
proof.
\end{proof}

\subsection{Bound on Successive Iteration Errors}
\label{appendix::convergence}
\begin{lemma}[Bound on Successive Iteration Errors]
Under the conditions of Theorem~\ref{theorem::regretHist},
\begin{align*}
&\|V_K - V_{K-1}\|_{2,\rho}\\
&\lesssim
\gamma^{K-1}\,
\|\hat v - V_B^\star\|_{\infty}
\\[0.8em]
&\quad+\;
\frac{\sqrt{\mathcal C_{B,\gamma}}}{1-\gamma}
C\Bigg(
\sqrt{\frac{B}{n}\,\log\!\Big(\frac{n}{B}\Big)}
\;+\;
\sqrt{\frac{\log(K/\delta)}{n}}
\\[0.8em]
&\qquad\qquad+
\max_{1\le j\le K}
\Big\|
b_0\{
(\widehat w_{\pi} - w_{\pi})\,
(\widehat q_{V_{j-1}} - q_{V_{j-1}})
\}
\big\|_{2,\rho}
\Bigg).
\end{align*}
\end{lemma}

\begin{proof}
Lemma~\ref{lemma::inexacterrors} and Lemma~\ref{lemma::errorperiter} give the
same bound for \(\|V_m-V_B^\star\|_{2,\rho}\), for \(m=K-1\) and \(m=K\), with
\(\gamma^m\|\hat v-V_B^\star\|_\infty\) as the finite-iteration term. By the
triangle inequality,
\begin{align*}
\|V_K - V_{K-1}\|_{2,\rho}
&\le\;
\|V_{K-1} - V_B^\star\|_{2,\rho}
\\[0.25em]
&\quad+\;
\|V_K - V_B^\star\|_{2,\rho}.
\end{align*}
Substituting the two bounds and absorbing constants proves the claim.
\end{proof}

\section{Proofs of Theorems \ref{theorem::Calerroriso} and \ref{theorem::regretiso}}

\begin{proof}[Proof of Theorem \ref{theorem::Calerroriso}]
The first-order optimality conditions for isotonic regression (equivalently, its
interpretation as a histogram estimator) imply that for any function
$f:\mathbb{R}\rightarrow\mathbb{R}$ that is a linear combination of the
indicator functions defining the isotonic partition,
\begin{equation*}
\frac{1}{n}\sum_{i=1}^n
f\!\big(V_K(S_i)\big)
\Big\{
\widehat Y_{V_{K-1}}(O_i)
-
V_K(S_i)
\Big\}
= 0.
\end{equation*}
See, for example, the proof of Lemma~C.1 in \citet{van2023causal} and
\citet{van2024stabilized}. Choosing $f$ appropriately yields
\begin{align*}
0
&=
\frac{1}{n}\sum_{i=1}^n
\Big\{
\Gamma_0(V_K)(S_i) - V_K(S_i)
\Big\} \\
&\qquad\qquad\times
\Big\{
\widehat Y_{V_{K-1}}(O_i)
- V_K(S_i)
\Big\},
\end{align*}
which is the same basic equality as \eqref{eqn::basicineq} in the proof of
Theorem~\ref{theorem::Calerrorhist}. The remainder of the argument proceeds
along the same lines with minor modifications.

Specifically, let $\mathcal{F}_{TV}$ denote the union of $\mathcal G_{\mathrm{iso}}$,
which is uniformly bounded by $2M$ under Condition~\ref{cond::C1}, with all
functions of bounded total variation bounded by the constant $C$ in
Condition~\ref{cond::C3}. By \citet{van1996weak}, this class satisfies the
uniform entropy integral bound $\mathcal{J}(\delta,\mathcal{F}_{TV})
\lesssim \sqrt{\delta}$.

Let
\[
m_K(t)
:=
E\!\left[(\mathcal T_\pi V_K)(S)\mid \hat v(S)=t,\mathcal C_n\right].
\]
Condition~\ref{cond::C3} states that \(m_K\) has bounded total variation. Since
\[
\Gamma_0(V_K)(S)=E\{m_K(\hat v(S))\mid V_K(S)\},
\]
and \(V_K\) is a nondecreasing function of \(\hat v\), Lemma 6 of
\citet{van2024stabilized} implies that the coarsened curve
\(\Gamma_0(V_K)\) also has bounded total variation and lies in
\(\mathcal{F}_{TV, \hat{v}} := \{f \circ \hat{v}: f \in
\mathcal{F}_{TV}\}\). Thus, it holds that
\[
\big(\Gamma_0(V_K) - V_K\big)
\big(\widehat Y_{V_{K-1}} - V_K\big)
\]
lies in a uniformly bounded subset of the class
\[
\widehat{\mathcal{G}}
:=
\Big\{
(f_1 - f_2)\big(\widehat Y_{f_2} - f_2\big)
:\;
f_1, f_2 \in \mathcal{F}_{TV, \hat{v}}
\Big\}.
\]
Arguing as in the proof of Theorem~\ref{theorem::Calerrorhist}, we have
\[
\mathcal{J}(\delta, \widehat{\mathcal{G}})
\lesssim
\mathcal{J}(\delta, \mathcal{F}_{TV})
\lesssim
\sqrt{\delta}.
\]
The proof now follows directly from Theorem~\ref{theorem::Calerrorhist} with this
new choice of $\widehat{\mathcal{G}}$ and its associated critical radius
$\delta_n = n^{-1/3}$ for monotone functions.
\end{proof}

 \begin{proof}[Proof of Theorem \ref{theorem::regretiso}]
The result follows directly from a union bound and the proofs of
Theorems~\ref{theorem::Calerrorhist} and \ref{theorem::regretHist}, with
$\mathcal G_B$ replaced by $\mathcal G_{B_n}$, the class of all
piecewise-constant functions with at most $B_n$ constant segments. In
particular, the entropy bound in Lemma~\ref{lemma::entropynumbers} continues to
apply to this class. Notably, the proofs of these theorems allow for
data-adaptive partitions and require only a deterministic upper bound on the
number of constant segments.
\end{proof}

\clearpage

\end{document}